 \providecommand{\sqnorm}[1]{{\lVert#1\rVert^{2}_{2}}}
 \providecommand{\norm}[1]{{\lVert#1\rVert_2}}
 \providecommand{\normc}[1]{{\lVert#1\rVert_1}}
 \newcommand{\argmin}{\operatornamewithlimits{arg\ min}}
\begin{document}


\title{Correlation Estimation from Compressed Images\thanks{This work
has been partly supported by the Swiss National Science Foundation,
under grant 200021-118230. Part of this work has been presented in IEEE ICASSP 2011 \cite{Vijay_ICASSP2011}}}

\author{Vijayaraghavan~Thirumalai and Pascal~Frossard \\
        Ecole Polytechnique F\'ed\'erale de Lausanne (EPFL) \\ Signal Processing Laboratory (LTS4), Lausanne, 1015 - Switzerland.
        \\ Email:\{vijayaraghavan.thirumalai, pascal.frossard\}@epfl.ch}

%
%
%

\maketitle

\begin{abstract}
This paper addresses the problem of correlation estimation in sets of compressed images. We consider a framework where images are represented under the form of linear measurements due to low complexity sensing or security requirements. We assume that the images are correlated through the displacement of visual objects due to motion or viewpoint change and the correlation is effectively represented by optical flow or motion field models. The correlation is estimated in the compressed domain by jointly processing the linear measurements. We first show that the correlated images can be efficiently related using a linear operator. Using this linear relationship we then describe the dependencies between images in the compressed domain. We further cast a regularized optimization problem where the correlation is estimated in order to satisfy both data consistency and motion smoothness objectives with a Graph Cut algorithm. We analyze in detail the correlation estimation performance and quantify the penalty due to image compression. Extensive experiments in stereo and video imaging applications show that our novel solution stays competitive with methods that implement complex image reconstruction steps prior to correlation estimation. We finally use the estimated correlation in a novel joint image reconstruction scheme that is based on an optimization problem with sparsity priors on the reconstructed images. Additional experiments show that our correlation estimation algorithm leads to an effective reconstruction of pairs of images in distributed image coding schemes that outperform independent reconstruction algorithms by 2 to 4 dB.
\end{abstract}

\begin{keywords}
Linear measurements, correlation estimation, distributed image compression, joint reconstruction
\end{keywords}


\section{Introduction}

\IEEEPARstart{I}n recent years, the increasing popularity of vision sensor networks has led to the generation of huge volume of visual information. This creates the need for effective information processing systems that are able to efficiently compress, analyze and store highly redundant information streams captured by multiple devices. Distributed processing solutions become highly attractive in such a context, as they permit to reduce the communication and computational power requirements in the sensors. The visual information is typically compressed and transmitted independently from each sensor node to a common decoder that jointly processes the correlated information streams. The inter-sensor communication needs are relaxed and the computational burden is shifted to the decoder. The estimation of the image correlation at decoder becomes however crucial in such distributed settings for image reconstruction or analysis tasks.

In this paper, we consider the problem of correlation estimation in a framework where multiple sensors transmit compressed images that have been obtained by a small number of linear projections of the original images, as illustrated in Fig.~\ref{Fig:block_scheme}. Such linear projections typically represent simple measurements in low complexity sensing systems \cite{Donoho,Candes}. We propose a novel solution for correlation estimation at the joint decoder where the analysis is performed directly in the compressed domain in order to avoid expensive image reconstruction tasks. This is especially useful for the analysis applications that do not target image reconstruction. We assume that the correlation between images corresponds to camera or object motion; this can be efficiently represented by optical flow or motion field models. We show that such a correlation model can be described by a linear operator and we further analyze in detail the effect of such operator in the compressed domain. Later, we cast the correlation estimation as a regularized energy minimization problem with constraints on data consistency as well as consistency of the motion field. In particular, we regularize the correlation model such that the motion values in neighboring pixels are similar except at image discontinuities. Such an optimization problem can be solved by Graph Cuts algorithms.
  
We analyze in details the performance of our novel correlation estimation framework. In particular, we study the penalty in the correlation estimation that is due to working in the  compressed domain as opposed to the original image domain as in traditional correlation estimation problems. We show that the penalty decreases when the number of measurements increases and that our algorithm tends to the optimal correlation estimate at high measurement rate. Extensive simulations in distributed stereo and video imaging applications confirm that the proposed solution provides effective estimates of the relative motion between images and even competes with solutions that implement expensive image reconstruction prior to correlation estimation. We finally study the performance of a novel joint reconstruction algorithm that uses our correlation estimates for decoding pairs of images. The joint reconstruction is cast as an optimization problem where the reconstructed images have to satisfy sparsity priors as well as consistency with both the measurements and the correlation estimates. We solve this joint reconstruction problem by effective proximal splitting methods and show that accurate correlation estimation in distributed image representation permits to outperform independent decoding solutions in terms of image quality.


The rest of the paper is organized as follows. Section \ref{sec:related_work} briefly overviews the related work. In Section \ref{sec:framework},  we describe the proposed framework and show how the correlation estimation problem carries out to the compressed domain. Section \ref{sec:prop_ce_algo} describes the proposed correlation estimation algorithm and its performance are analyzed in details on Section \ref{sec:results}. In Section \ref{sec:conc} we draw some concluding remarks. 

\begin{figure*}
\begin{minipage}[t]{1.0\linewidth}
 \centering
 \centerline{\epsfig{figure=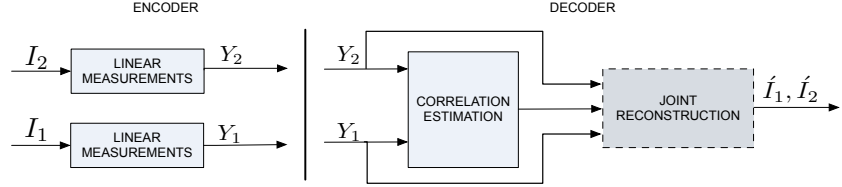,width=11.5cm}}
\end{minipage}
\caption{Schematic representation of the proposed scheme. The images
$I_1$ and $I_2$ are correlated through displacement of scene
objects due to viewpoint change or motion of scene objects. The correlation model is estimated directly in the compressed domain without any intermediate image reconstruction. The correlation information can be used for optional joint reconstruction. }
\label{Fig:block_scheme}
\end{figure*}

\section{Related work} \label{sec:related_work}

This section describes the literature related to the framework proposed in this paper. We first present sensing solutions based on linear measurements. We then review the correlation estimation algorithms in distributed image representation systems and finally we discuss the most relevant works about joint reconstruction of correlated images. 

In recent years, signal acquisition based on random projections received a significant attention in many applications like medical imaging, compressive imaging and even sensor networks. Donoho \cite{Donoho} and Candes \emph{et al.} \cite{Candes} show that the small number of linear measurements contain enough information to reconstruct a sparse or a compressible signal. In particular they show that if a signal has a sparse representation in one basis then it can be recovered from a small number of linear measurements taken on another  (random)  basis that is incoherent with the first one. Essentially, if the signal is $K$-sparse (i.e., if the signal contains $K$ significant components), then one need approximately $cK$ linear measurements (typically $c$ = 3 or 4) to reconstruct the signal with high probability \cite{Candes_imagecoding}. Such results open the door to novel low complexity sensing solutions where the computational complexity for signal reconstruction or analysis is pushed to the decoder. These ideas have been applied to image acquisition \cite{singlepixel_CS, Mun_blockCS, Gan_Eusipco} and later extended to video sequences \cite{Stankovic,Park,Vaswani, Pudlewski}. The effect of measurement quantization and the lossy compression of linear measurements has been studied in \cite{Goyal}.

One of the key characteristics in imaging applications resides in the high correlation between multi-view images or successive images in a video sequence. The correlation could be exploited for effective reconstruction of image sets or for joint analysis tasks in distributed systems. Duarte \emph{et al.} \cite{Duarte_DCS,Duarte_DCS1} have proposed different correlation models for the distributed compression of correlated signals from linear measurements. In particular, they introduce three joint sparsity models (JSM) in order to exploit the inter-signal correlation for the joint reconstruction. They are respectively described by (i) JSM-1, where the signals share a common sparse support plus a sparse innovation part specific to each signal, (ii) JSM-2, where the signals share a common sparse support with different coefficients, and (iii) JSM-3 with a non-sparse common signal with an individual sparse innovation in each signal. These correlation models permit an effective joint reconstruction with a small number of measurements compared to independent reconstruction. These simple joint sparsity models are however not ideal for multi-view images or video sequences, as the correlation model in such scenarios is usually given in the form of disparity or motion vectors respectively. The authors in \cite{Kang_ICASSP,Do_ICIP,YiMa_PCS} have proposed a distributed joint reconstruction scheme for video sequences based on linear measurements. These schemes split the video sequences into key frames  and compressed sensing (CS) frames. The random measurements are computed independently for each compressed sensing frame and are transmitted to the joint decoder. The key frames are intra coded and the joint decoder builds the side information from the intra coded key frames. The generated side information is then used to decode the CS frame by solving an optimization problem which assumes that the prediction error for the CS frame is sparse in an orthonormal basis \cite{Do_ICIP} or block-based adaptive dictionary \cite{YiMa_PCS}. Similar ideas have been used by Trocan \emph{et al.} \cite{Trocan_ICME,Trocan_MMSP} for distributed multi-view compression where all images are however given in the form of linear measurements. The joint decoder first reconstructs all the views by solving a regularized optimization problem. Then, the independently reconstructed views are used to estimate the underlying correlation model in the form of disparity image. The disparity image is used to jointly reconstruct all the views in a multistage refinement framework in which each refinement stage reconstructs a view by assuming that the prediction error is sparse in a dual tree wavelet basis. In our previous work \cite{Vijay_TIP}, we have proposed a methodology to estimate the correlation between pairs of frames where one image serves as a reference image. Unfortunately, reconstructing the reference image in a compressed measurements framework typically requires methods based on solving $l_2$-$l_1$ optimization problems that are highly complex. The works in the literature typically use reference frames that are encoded as intra-frames or reconstructed with complex optimization tools prior to correlation estimation. In this paper, we rather propose to avoid the explicit reconstruction of the images and directly estimate the correlation in the compressed domain. In general, it permits to reduce the computational complexity at decoder, especially in applications where image reconstruction is not necessary.

Finally, other works have recently addressed the problem of joint reconstruction of correlated images given in compressed form. For example, Park \emph{et al.} \cite{Wakin} have proposed an image registration and joint reconstruction algorithm for multi-view images based on manifold lifting. The underlying correlation between views is exploited by constructing an image appearance manifold where the images represent sample points on the manifold; these points are controlled by a few camera parameters (e.g., rotation, translation etc.).  By knowing the initial camera positions the images are jointly reconstructed based on an $l_2$-$l_1$ optimization framework. Then, the reconstructed scene is used to refine the camera parameters and thus both the camera positions and the scene are jointly estimated using alternating minimization techniques. In another framework \cite{Li_jr}, the authors have proposed a joint reconstruction scheme based on a regularized optimization framework. The two regularization terms encourage sparse priors of multi-view images and their difference images. However, the correlation between images is not efficiently exploited as the correlation model in multi-view image settings is usually given in the form of disparity image and not as a sparsity prior of the signal differences. The joint reconstruction scheme proposed in this paper rather builds the correlation model in the form of disparity or motion field and thus facilitates an efficient joint image representation. Furthermore, the correlation model is built directly from the linear measurements in the compressed domain and thus avoids the computational complexity of reconstructing the reference images. The proposed scheme is based on low complexity linear measurements and it provides an interesting flexible solution for distributed processing in vision sensors, targeting applications like object detection, distributed rendering or distributed joint signal reconstruction.  

\section{Distributed Representation of Correlated Images} \label{sec:framework}

\subsection{Framework}

We consider a framework where the images represent a scene at different time instants or from different viewpoints. For the sake of clarity, we consider a pair of images $I_1$ and $I_2$  (with resolution $N$=$N_1 \times N_2$) but the framework extends to larger number of images. These images are represented by linear measurements that correspond to the projection of the image pixel values on a set of coding vectors. Typically, the coding vectors can be constructed from Gaussian or Bernoulli distributions \cite{Donoho} or with a block structure \cite{Gan_Eusipco, Gan_ICASSP} for easier handling and fast sampling of large images. The measurements are transmitted to a joint decoder that can estimate the correlation between the compressed images and possibly perform a joint reconstruction of the image set. The framework is illustrated in Fig.~\ref{Fig:block_scheme}.

In more details, the sensors process images row by row. Let $I_{1,k}$ and $I_{2,k}$ represent the $k^{th}$ row of the images $I_1$ and $I_2$ respectively, and $Y_{1,k}$ and $Y_{2,k}$ represent the linear measurements computed from $I_{1,k}$ and $I_{2,k}$ using the measurement matrices $ \phi_1^k$ and  $ \phi_2^k$ respectively. The measurements $Y_{1,k}$ and $Y_{2,k}$ are computed as
\begin{eqnarray}\label{eqn:meas_row}
Y_{1,k}  = \phi_1^k~I_{1,k}^T, \quad \forall k = 1,2,\ldots,N_1,  \\ \nonumber
Y_{2,k} =  \phi_2^k~I_{2,k}^T, \quad  \forall k = 1,2,\ldots,N_1,
\end{eqnarray}
where $(.)^T$ denotes the transpose operator. It should be noted that $\phi_1^k$ and $\phi_2^k$ are of dimensions $M \times N_2$, where $M << N_2$ is the number of measurements computed for each row in the image. From Eq.~(\ref{eqn:meas_row}) it is easy to check that the measurements $Y_1 = {[Y_{1,1},Y_{1,2}, \ldots Y_{1,{N_1}}]}^T$ and $Y_2 = {[Y_{2,1},Y_{2,2}, \ldots Y_{2,{N_1}}]}^T$ can be computed as
\begin{eqnarray}\label{eqn:measurement}
\left[ \begin{array}{c}
                 Y_{i,1}  \\
                 Y_{i,2}   \\
                \vdots \\
                 Y_{i,{N_1}}  \end{array} \right]  
                  = \Phi_i
                  \underbrace{\left[ \begin{array}{c}
                 I_{i,1} ^T \\
                 I_{i,2} ^T  \\
                \vdots\\
                 I_{i,{N_1}}^T  \end{array} \right] }_{I_i}, \quad \forall i \in \{1,2\},
                 \end{eqnarray}
where $\Phi_i$ is the measurement matrix used to sample the $i^{th}$ image $\forall i = \{1,2\}$. It is represented as 
\begin{eqnarray}
\Phi_i  = {\left[ \begin{array}{cccc}
               \phi_i^1 & 0 & \ldots &0 \\
                0 &  \phi_i^2 & \ldots &0  \\
               \vdots &\vdots &\ddots &\vdots \\
                 0 & 0 &\ldots & \phi_i^{N_1}  \end{array} \right]}_{K \times N}, \forall i \in \{1,2\},
\end{eqnarray}
where $K = MN_1$, $N = N_1N_2$ and $K/N$ represents the measurement rate.

\subsection{Correlation Model} 
\label{sec:relation_images}

In the above settings, the correlation between images can be mainly explained by the relative displacement of objects in the scene. This can be modeled effectively by the optical flow that determines the amount of displacement of objects or pixels in different images. We show now, how such a correlation model can be described by a linear operator. Let $\bold{m}^h$ and $\bold{m}^v$  represent the horizontal and vertical motion components. As the visual objects in the images $I_1$ and $I_2$ are displaced, it is likely that the pixel at position $\bold{z}= (k,l)$ in one image moves to $\bold{z^\prime}=(k+\bold{m}^h(k,l),l+\bold{m}^v(k,l))$ in the second image. Thus, the images $I_1$ and $I_2$ can be simply related by a linear operator $\mathcal{T}$ that changes the coordinate system from $(k,l)$ in the first image to $(k+\bold{m}^h(k,l),l+\bold{m}^v(k,l))$ in the second image, i.e., 
\begin{eqnarray}\nonumber
I_2 &= &\mathcal{T}\{I_1\}  \\  \label{eqn:corr_linear}
I_{2,k}(l) & = & I_{1,(k+\bold{m}^h(k,l)}(l+\bold{m}^v(k,l)).
\end{eqnarray}
For mathematical convenience we use an equivalent representation of Eq.~(\ref{eqn:corr_linear}) in the form of matrix multiplication:
\begin{equation} \label{eqn:corr_matrix}
I_{2,k}^T= A^k~ \underbrace{\left[ \begin{array}{c}
                 I_{1,1} ^T \\
                 I_{1,2} ^T  \\
                \vdots \\
                 I_{1,{N_1}}^T  \end{array} \right]}_{I_1} , \quad \forall k = 1,2,\ldots,N_1,
\end{equation}
where  $A^k$ is a matrix of dimensions $N_2 \times N_1N_2$ whose entries are determined by the horizontal and vertical components of the motion field in the $k^{th}$ row of pixels, i.e., $\bold{m}^h(k,.)$ and $\bold{m}^v(k,.)$. The elements of the matrix $A^k$ are given by 
\begin{equation}\label{eqn:matA}
A^k(l,l+\beta_1+\beta_2N_2) =    \left\{
                              \begin{array}{ll}
                             1 & \mbox{if \ } \bold{m}^h(k,l) = \beta_1,\bold{m}^v(k,l) = \beta_2 \\
                             0 &  \mbox{otherwise} \end{array} \right.
\end{equation}
If $l+\beta_1+\beta_2N_2 > N_1N_2$ (e.g., at image boundaries), we set $l+\beta_1+\beta_2N_2 = N_1N_2$ so that the dimensions of the matrix $A^k$ stays $N_2 \times N_1N_2$. It is easy to check that the matrix $A^k$ formed using Eq.~(\ref{eqn:matA}) contains only one '$1$' in each row; this implies $I_{2,k}(l)= I_{1,k+\beta_1}(l+\beta_2)$ if $A^k(l,l+\beta_1+\beta_2N_2) =1$. The action of the matrix $A^k$ shifts the pixels in $I_{1}$ according to the motion given as $\bold{m}^h(k,.)$ and $\bold{m}^v(k,.)$ and forms an estimate of the image $I_{2,k}$. It should be noted that the matrix $A^k$ is completely determined by the $k^{th}$ row of motion vectors $\bold{m}^h(k,.)$ and $\bold{m}^v(k,.)$.

The relation given in Eq. (\ref{eqn:corr_matrix}) can be extended to all rows of the image $I_2$. The images $I_1$ and $I_2$ are finally related by a linear operator $A$ such that $I_2 = A \ I_1$ which can be written as
\begin{eqnarray}\label{eqn:I2AI1}
\underbrace{\left[ \begin{array}{c}
                 I_{2,1}^T  \\
                 I_{2,2}^T  \\
                \vdots \\
                 I_{2,{N_1}}^T  \end{array} \right] }_{I_2}
                  =  \underbrace{\left[ \begin{array}{c}
                 A^1 \\
                 A^2  \\
                \vdots \\
                 A^{N_1} \end{array} \right] }_{A}
                 \underbrace{ \left[ \begin{array}{c}
                 I_{1,1} ^T \\
                 I_{1,2} ^T  \\
                \vdots \\
                 I_{1,{N_1}}^T  \end{array} \right]  }_{I_1}.
                 \end{eqnarray}
This relation is illustrated on the lefthand side of Fig.~\ref{Fig:relationbetA_B}.

\begin{figure}[t!]
 \centering
 \centerline{\epsfig{figure=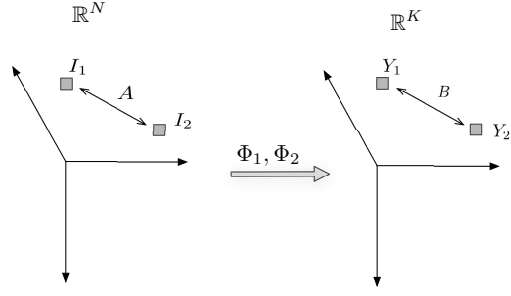,width=7cm}}
\caption{Illustration of the relation between the matrices A and B. On the left the images $I_1, I_2 \in \mathbb{R}^N$ are related using the matrix $A$ where $N = N_1 N_2$. In the compressed domain, the measurement vectors  $Y_1, Y_2 \in \mathbb{R}^K$ are related using the matrix $B$ where $K = MN_1$. The matrices  $A$ and $B$ can be related by $B \approx  \Phi_2 A \Phi_1^{\dagger}$ where $\Phi_i$'s are the sensing matrices.}
\label{Fig:relationbetA_B}
\end{figure}

\subsection{Correlation in the measurement domain} 
\label{sec:Rel_meas}

We now extend the above correlation model in the compressed domain. Without loss of generality, we first assume that the measurements $Y_{1}$ and $Y_{2}$ can be related by a linear transformation $B$, i.e., 
\begin{eqnarray} \label{eqn:Y2BY1}
\underbrace{\left[ \begin{array}{c}
                 Y_{2,1}  \\
                 Y_{2,2}  \\
               \vdots \\
                 Y_{2,{N_1}}  \end{array} \right] }_{Y_1}
                  =  \underbrace{\left[ \begin{array}{c}
                 B^1 \\
                 B^2  \\
                \vdots \\
                 B^{N_1} \end{array} \right] }_{B}
               \underbrace{  \left[ \begin{array}{c}
                 Y_{1,1}  \\
                 Y_{1,2}   \\
                \vdots \\
                 Y_{1,{N_1}}  \end{array} \right]}_{Y_2},
                 \end{eqnarray}
where $B^k$, $\forall k = 1,2,\ldots,N_1$ is a matrix with dimensions $M \times MN_1$, i.e., the measurements $Y_{2,k}$ can be related to $Y_1$ as
\begin{equation} \label{eqn:meas_matrix_row}
Y_{2,k}= B^k~Y_{1}, \;  \forall k = 1, 2,\ldots,N_1.
\end{equation}
Any two vectors $Y_{1}, Y_{2} \in \mathbb{R}^{MN_1}$ can be related by a linear transformation $B$ as long as $Y_{1} \neq \bold{0}$, which is the case in our framework. However, this linear transformation $B$ a priori does not have any special form. We are interested in understanding the relation between this matrix and the matrix $A$ that shifts the pixels between images $I_1$ and $I_2$. Pre-multiplying Eq.~(\ref{eqn:I2AI1}) by $\Phi_2$ on both sides, one can write 
\begin{equation} \label{eqn:relation1}
 Y_{2} = \Phi_2 I_{2} = \Phi_2 AI_{1}.  
 \end{equation}
In addition, by replacing $Y_1 =\Phi_1I_1 $ in Eq.~(\ref{eqn:Y2BY1}), one obtains 
 \begin{equation} \label{eqn:relation2}
 Y_{2}= BY_{1} = B \Phi_1 I_{1}.
\end{equation}
From Eqs.~(\ref{eqn:relation1}) and~(\ref{eqn:relation2}) the relation between $B$ and $A$ can finally be given as
\begin{equation}\label{eqn:relation_A_B}
 B \Phi_1 = \Phi_2 A. 
\end{equation}
This forms an over-determined system of linear equations, as the number of unknown in matrix $B$ is smaller than the number of equations in $\Phi_2A$. In this case, the optimal matrix $\hat{B}$ that minimizes $\norm{B \Phi_1 - \Phi_2 A}$ is given by
\begin{equation}\label{eqn:best_B}
 \hat{B} = \Phi_2 A {\Phi_{1}}^\dagger,
\end{equation}
where $\dagger$ denotes the pseudo-inverse operator. As the rows in $\Phi_1$ are generally orthonormal, the pseudo-inverse can be computed using the transpose operator, i.e., ${\Phi_{1}}^\dagger = {\Phi_{1}}^T$. Substituting $\hat{B} = \Phi_2 A {\Phi_{1}}^T$ in Eq.~(\ref{eqn:Y2BY1}) the relation between the measurements becomes 
\begin{eqnarray}\label{eqn:meas_matrix_row1}
Y_{2} &\approx  &\Phi_2 A {\Phi_{1}}^\dagger Y_{1} = \Phi_2 A {\Phi_{1}}^T Y_{1}  \\ \label{eqn:temp1}
Y_{2,k} &\approx  &\phi_2^k A^k {\Phi_{1}}^T Y_{1},  \; \forall k  =1,2,\ldots,N_1.
\end{eqnarray}
Eq.~(\ref{eqn:temp1}) comes from the fact that measurements are computed across rows of pixels in our framework. The relationship between the matrices  $A$ and $B$ is illustrated in Fig.~\ref{Fig:relationbetA_B}, where the matrices  $A$ and $B$ are used to relate the points in the original and compressed domains, respectively. In the next section, we propose an algorithm for estimating the correlation model directly from the linear measurements $Y_1$ and $Y_2$. 


\section{Correlation estimation from linear measurements} \label{sec:prop_ce_algo}

\subsection{Regularized energy minimization problem}

We propose in this section a method for estimating the correlation between images from the compressed measurements without any explicit image reconstruction step. The objective is to compute a flow or motion field that represents the motion between images $I_1$ and $I_2$. We denote this flow field as $\mathcal{M} = (\bold{m}^h,\bold{m}^v)$, where $\bold{m}^h$ and $\bold{m}^v$ are horizontal and vertical components of the motion, respectively. The problem then consists in finding the value of the flow field $\mathcal{M}$ at each pixel position $\bold{z} = (k,l)$ such that the estimated correlation is consistent with the measurement vectors $Y_1$ and $Y_2$. At the same time, the motion field has to be piecewise smooth in order to model consistent motion of visual objects. We propose to cast the correlation estimation as a regularized energy minimization problem where the energy $E(\mathcal{M})$ is composed of a data term $E_d(\mathcal{M})$ and a smoothness term $E_s(\mathcal{M})$. The optimal mapping  $\mathcal{M}^*$ is obtained by minimizing the energy function $E(\mathcal{M})$ as 
\begin{equation} \label{eqn:energy_min}
\mathcal{M}^* = \argmin_{\mathcal{M}} ~E(\mathcal{M}) = \argmin_{\mathcal{M}} [E_d(\mathcal{M}) + \lambda E_s(\mathcal{M})],
\end{equation}
where $\lambda$ balances the importance of the data and smoothness terms. 

We now discuss in more details the components of the energy function. The smoothness term measures the penalty of assigning different motion values to the adjacent pixels. We write it as
\begin{equation} \label{eqn:smoothcost}
E_s (\mathcal{M}) = \sum_{\bold{z},\bold{z^\prime} \in \mathcal{N}} V_{\mathcal{M}}(\bold{z},\bold{z^\prime}),
\end{equation}
where $\bold{z},\bold{z^\prime}$ are neighbour pixels in the 4-pixel neighbourhood denoted by $\mathcal{N}$. The term $V_{\mathcal{M}}(\bold{z},\bold{z^\prime})$ is given as
\begin{equation}
V_{\mathcal{M}}(\bold{z},\bold{z^\prime}) = min(|{\bold{m}^h}(\bold{z})-{\bold{m}^h}(\bold{z^{\prime}})| + |{\bold{m}^v}(\bold{z})-{\bold{m}^v}(\bold{z^{\prime}})|, \tau),
\end{equation}
where $\tau$ sets an upper level on the smoothness penalty that helps preserving the discontinuities \cite{Veksler}. 

Next, the data function measures the consistency of a particular motion value for pixel $\bold{z}$ with the vectors $Y_1$ and $Y_2$. Classically, the accuracy of the motion values is evaluated with the original images \cite{Baker_flow} and the data cost is typically given as
\begin{equation} \label{eqn:datacost}
\tilde{E}_d(\mathcal{M})  =  \sum_{k=1}^{N_1} \sum_{l=1}^{N_2} \Delta_{\mathcal{M}}(k,l),
\end{equation} 
where $ \Delta_{\mathcal{M}} (k,l) = \sqnorm{ I_{2,k}(l) - I_{1,k+\bold{m}^h(k,l)}(l+\bold{m}^v(k,l))}$ represents the error of matching the pixel at position $(k,l)$ in the second image with a pixel in the first image that is selected according to the motion information. As discussed in Section \ref{sec:relation_images}, the effect of motion between images can be captured by a linear operator $A$ that is a composition of sub-matrices $A^k$. We can therefore rewrite Eq.~(\ref{eqn:datacost}) as 
\begin{equation} \label{eqn:datacost_images}
\tilde{E}_d(\mathcal{M}) =  \sum_{k=1}^{N_1} \sqnorm{I_{2,k}^T - A^k_{\mathcal{M}} I_{1}},
\end{equation}
where the sub-matrix $A^k$ depends on the motion field $\mathcal{M}$ according to Eq.~(\ref{eqn:matA}). In the rest of the development, we drop the index $\mathcal{M}$ as the dependency on the motion field is clear from the context. 

In our framework however we do not have access to the original images, but only to the measurement vectors $Y_1$ and $Y_2$. We thus approximate the data cost $\tilde{E}_d(\mathcal{M})$ by $E_d(\mathcal{M})$ that is computed directly from the measurement vectors. It can be written as
\begin{eqnarray}  \label{eqn:datacost_der_a}
\tilde{E}_d(\mathcal{M})  &=  & \sum_{k=1}^{N_1} \sqnorm{I_{2,k}^T - A^k I_{1}} \\   \label{eqn:datacost_der_b}  
& \approx & \sum_{k=1}^{N_1} \sqnorm{Y_{2,k} - B^k Y_{1} }      \\  \label{eqn:datacost_der_c}
        &\approx & \sum_{k=1}^{N_1} \sqnorm{Y_{2,k} -\phi_2^k A^k {\Phi_1}^T  Y_{1}} \\ \label{eqn:datacost_measu}     
         & = & E_d(\mathcal{M}).
\end{eqnarray}
Note that the data cost approximation due to working in the compressed domain comes from the relation between the matrices  $A$ and $B$ that is given as $B \approx \Phi_2A\Phi_1^T$. We study in details the effect of this approximation in the next section. 

We can finally rewrite the regularized energy objective function for the correlation estimation problem. It reads as
\begin{eqnarray}\label{eqn:final_energy}
E(\mathcal{M}) =  \sum_{k=1}^{N_1} \sqnorm{Y_{2,k} -\phi_2^k A_{\mathcal{M}}^k {\Phi_1}^T  Y_{1}} +\lambda \sum_{\bold{z},\bold{z^\prime} \in \mathcal{N}} V_{\mathcal{M}}(\bold{z},\bold{z^\prime}).
\end{eqnarray} 
This cost function is used in the optimization problem of Eq. (\ref{eqn:energy_min}), which becomes a non-convex problem. The search space is discrete and usually constrained by limits on each of the motion values which typically define a motion search window. The solution to this problem can be determined with strong optimization techniques based on Graph Cuts \cite{Veksler, Boykov_GC} or Belief propagation \cite{Belief_prop}. A comprehensive overview of various energy minimization techniques is summarized in \cite{Szeliski_MRF}. In this paper, we use an optimization algorithm based on $\alpha$-expansion mode in Graph Cuts whose complexity is bounded by a low order polynomial \cite{Boykov_GC}. 

Finally, it should be noted that the correlation estimation can also be performed by block of pixels. In this case, each block of pixels is assumed to move in a coherent way, and the objective of the correlation estimation problem is to compute one motion vector per block. The data cost function can then be modified in a straightforward way by imposing the same motion vector for all the pixels in a block. The
smoothness function is also modified in this case such that it penalizes the difference between the motion values of adjacent blocks rather than neighboring pixels. The optimization problem keeps the same form in block-based motion estimation but the search space is dramatically reduced as the number of motion vectors is smaller.

\subsection{Compressed domain penalty} 
\label{sec:optimal_property}

We now discuss the penalty of estimating the correlation from measurements instead of original images. When the correlation $\mathcal{M}$ is given, this penalty corresponds to the difference between the values of the regularized energy function of Eq. (\ref{eqn:energy_min}) that is evaluated from original images or respectively measurements. Recall that the smoothness cost function $E_s (\mathcal{M})$ depends only on the correlation (see Eq. (\ref{eqn:smoothcost})). Therefore, the estimation penalty is identical to the error between the data cost functions $\tilde{E_d}(\mathcal{M})$ and $E_d(\mathcal{M})$ that are computed in the original and compressed domains respectively. We first show that the penalty is bounded. Then, we show that the penalty decreases monotonically when the number of measurements increases. 

\newtheorem{prop}{Proposition} 

\begin{prop}\label{prop:one}
The penalty of estimating the correlation from measurements is bounded. In particular we have $|{(1-\delta)^2}\tilde{E_d}(\mathcal{M})- C_l| \leq E_d(\mathcal{M}) \leq (1+\delta)^2 \tilde{E_d}(\mathcal{M}) + C_u$, where $\delta>0$, $C_l = \eta^2 + 2 (1-\delta) \alpha \eta$, $C_u = \eta^2 + 2 (1+\delta) \alpha \eta$, $\alpha=  \sum_{k = 1}^{N_1}\|I_{2,k}^T - A^k I_1 \|_2 $, ${\eta = \sum_{k=1}^{N_1} \sigma_{max}(A^k) \sum_{p = k-w_y}^{k+w_y}\norm{\tilde{I}_{1,p} -  I_{1,p}}}$ with $\tilde{I}_1 = \Phi_1^T Y_1$.
\end{prop}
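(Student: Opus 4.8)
The plan is to view $E_d(\mathcal{M})$ as a perturbed version of $\tilde{E_d}(\mathcal{M})$ transported into the measurement space, and to control the perturbation using the near-isometry of the sensing matrices together with the banded structure of the motion operators $A^k$.

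First I would rewrite each summand of $E_d(\mathcal{M})$ in Eq.~(\ref{eqn:datacost_measu}). Since $Y_{2,k} = \phi_2^k I_{2,k}^T$ by Eq.~(\ref{eqn:meas_row}) and $\Phi_1^T Y_1 = \tilde{I}_1$ by definition, the $k$-th term equals $\sqnorm{\phi_2^k(I_{2,k}^T - A^k \tilde{I}_1)}$. I then split the inner residual as $I_{2,k}^T - A^k\tilde{I}_1 = e_k - f_k$, where $e_k := I_{2,k}^T - A^k I_1$ is the true prediction residual, so that $\tilde{E_d}(\mathcal{M}) = \sum_k \sqnorm{e_k}$ and $\alpha = \sum_k \norm{e_k}$, and $f_k := A^k(\tilde{I}_1 - I_1)$ is the error caused by replacing $I_1$ with its back-projection $\tilde{I}_1 = \Phi_1^T Y_1$. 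The key structural observation is that, since the motion field is confined to a vertical search window of half-width $w_y$, the matrix $A^k$ of Eq.~(\ref{eqn:matA}) has nonzero columns only in the blocks associated with rows $k-w_y,\ldots,k+w_y$ of $I_1$; hence $\norm{f_k} \le \sigma_{max}(A^k)\sum_{p=k-w_y}^{k+w_y}\norm{\tilde{I}_{1,p} - I_{1,p}}$, and summing over $k$ gives $\sum_{k=1}^{N_1}\norm{f_k} \le \eta$.

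Next I would stack the measurement-domain quantities into $u = (\phi_2^k e_k)_k$ and $v = (\phi_2^k f_k)_k$, so that $E_d(\mathcal{M}) = \sqnorm{u - v}$. Assuming the prediction residuals $e_k$ are sparse enough for $\phi_2^k$ to obey the restricted isometry property with constant $\delta$, one has $(1-\delta)\sqrt{\tilde{E_d}} \le \norm{u} \le (1+\delta)\sqrt{\tilde{E_d}}$, while the orthonormality of the rows of $\phi_2^k$ yields $\norm{v} \le \big(\sum_k \sqnorm{f_k}\big)^{1/2} \le \sum_k \norm{f_k} \le \eta$. The triangle inequality $\norm{u-v} \le \norm{u} + \norm{v}$, squared and combined with $\tilde{E_d} \le \alpha^2$ to convert the cross term $\eta\sqrt{\tilde{E_d}}$ into $\eta\alpha$, gives the upper bound $E_d(\mathcal{M}) \le (1+\delta)^2 \tilde{E_d} + 2(1+\delta)\alpha\eta + \eta^2 = (1+\delta)^2\tilde{E_d} + C_u$. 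Symmetrically, the reverse triangle inequality $\norm{u-v} \ge \norm{u} - \norm{v} \ge (1-\delta)\sqrt{\tilde{E_d}} - \eta$, squared and treated the same way, gives $E_d(\mathcal{M}) \ge (1-\delta)^2\tilde{E_d} - 2(1-\delta)\alpha\eta + \eta^2 \ge (1-\delta)^2\tilde{E_d} - C_l$; together with $E_d(\mathcal{M})\ge 0$ this yields the stated two-sided bound $|(1-\delta)^2\tilde{E_d} - C_l| \le E_d(\mathcal{M})$.

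I expect the main obstacle to be the RIP step: it is legitimate only if the prediction residual $e_k$ is (approximately) sparse, which must be posited as part of the correlation model, and its effective sparsity has to be compatible with the restricted-isometry order of $\phi_2^k$. A secondary delicate point is the reverse-triangle step, which is informative only when $(1-\delta)\sqrt{\tilde{E_d}} \ge \eta$; outside this regime $(1-\delta)^2\tilde{E_d} < \eta^2 \le C_l$, so $|(1-\delta)^2\tilde{E_d}-C_l|$ is small and the essential content is the one-sided estimate $E_d(\mathcal{M}) \ge (1-\delta)^2\tilde{E_d} - C_l$, the absolute-value form being exact in the regime $(1-\delta)^2\tilde{E_d} \ge C_l$ where the penalty analysis is meaningful. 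The remaining work --- expanding the squares, Cauchy--Schwarz on the cross terms, and the elementary inequalities $\sum_k a_k b_k \le (\sum_k a_k)(\sum_k b_k)$ and $\sum_k a_k^2 \le (\sum_k a_k)^2$ for nonnegative sequences --- is routine.
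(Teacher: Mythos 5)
Your proof is correct in substance and, for most of its length, retraces the paper's own argument: the same splitting of the residual $I_{2,k}^T - A^k\Phi_1^T Y_1$ into the true prediction error $e_k = I_{2,k}^T - A^k I_1$ plus the back-projection error $A^k(\tilde{I}_1-I_1)$, the same triangle and reverse-triangle inequalities, the same bound $\norm{A^k(\tilde{I}_1-I_1)} \leq \sigma_{max}(A^k)\sum_{p=k-w_y}^{k+w_y}\norm{\tilde{I}_{1,p}-I_{1,p}}$ from the band structure of $A^k$, and the same elementary conversions of the cross terms into $\eta\alpha$ (you stack the rows and use $\tilde{E}_d \leq \alpha^2$ once; the paper bounds each row and uses $\sum_k \eta_k^2 \leq (\sum_k\eta_k)^2$ and $\sum_k\eta_k\norm{e_k}\leq\eta\alpha$ --- these are equivalent). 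The one genuine divergence is the justification of the near-isometry $(1-\delta)\norm{e_k}\leq\norm{\phi_2^k e_k}\leq(1+\delta)\norm{e_k}$. You invoke the restricted isometry property and therefore must posit that the residuals $e_k$ are sparse, which you rightly flag as the main obstacle. The paper sidesteps this entirely: for a fixed motion field $\mathcal{M}$ the vectors $\{I_{2,k}^T, A^k I_1\}$ form a finite point set $\mathcal{P}$, so the Johnson--Lindenstrauss lemma gives the two-sided distance preservation with high probability once $M = \mathcal{O}(\delta^{-2}\log|\mathcal{P}|)$, with no sparsity hypothesis on $e_k$ whatsoever. That is the cleaner route here and resolves your obstacle; you should adopt it. Your caveat about the lower bound is also well taken: the absolute-value form $|(1-\delta)^2\tilde{E}_d(\mathcal{M})-C_l|\leq E_d(\mathcal{M})$ is only cleanly established in the regime $(1-\delta)\sqrt{\tilde{E}_d(\mathcal{M})}\geq\eta$, and the paper's own derivation shares this looseness (it adjusts the argument of an absolute value term by term), so your explicit one-sided statement $E_d(\mathcal{M})\geq(1-\delta)^2\tilde{E}_d(\mathcal{M})-C_l$ is, if anything, the more defensible formulation.
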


\begin{proof}
Let us assume that $\mathcal{M}$ or equivalently $A$ is given. Then, the points $\mathcal{P} = \{I_{2,k}^T, A^k I_1 : k = 1,2,\cdots,N_1\}$ forms a finite set. According to the Johnson-Lindenstrauss (JL) lemma, the distances between points in $\mathcal{P}$ are preserved in the measurement domain $\mathbb{R}^M$ when $M = \mathcal{O}( \delta^{-2}\;log |\mathcal{P}|)$ measurements are computed with a measurement matrix $\phi_2^k$ \cite{Baraniuk_JLlemma,Gan_JLlemma}, where $|\mathcal{P}|$ denotes the number of points in $\mathcal{P}$. Mathematically, the JL-embedding is given as 
\begin{equation} \label{eqn:JLlemma}
(1-\delta)\norm{I_{2,k}^T - A^k I_1}  \leq \norm{\phi_2^k I_{2,k}^T -  \phi_2^k A^k I_1}  \leq (1+\delta) \norm{I_{2,k}^T - A^k I_1},
\end{equation}
for a positive constant $\delta$.  It should be noted that, when the measurement matrix $\phi_2^k$ satisfies Eq.~(\ref{eqn:JLlemma}), then with high probability it satisfies the restricted isometry property (RIP). For more details related to the connection between the JL-lemma and the RIP we refer the reader to \cite{Baraniuk_JLlemma,Gan_JLlemma}.  Eq.~(\ref{eqn:JLlemma}) holds with high probability not only for measurement matrices constructed using Gaussian and Bernoulli distributions but also for structured measurement matrices constructed using orthonormal bases, e.g., DCT, FFT \cite{Gan_JLlemma}. In our experiments we construct measurement matrices using structured FFT. 

For a given row index $k$, the term  $Y_{2,k} -  \phi_2^k A^k\Phi_1^{T} Y_1$  in Eq.~(\ref{eqn:datacost_der_c}) we can write as  
\begin{eqnarray} \nonumber 
Y_{2,k} -  \phi_2^k A^k\Phi_1^{T} Y_1  &=&\phi_2^k I_{2,k}^T -  \phi_2^k A^k\Phi_1^T \Phi_1I_1 \\ \nonumber
&=&\phi_2^k I_{2,k}^T - \phi_2^k A^kI_1 + \phi_2^k A^kI_1 -\phi_2^k A^k\Phi_1^T  \Phi_1 I_1 \\ \label{eqn:thm_step1}
&=&\phi_2^k I_{2,k}^T - \phi_2^k A^kI_1 + E I_1,
\end{eqnarray}
where $E = \phi_2^k A^k -\phi_2^k A^k\Phi_1^T  \Phi_1 $. The term $\norm{Y_{2,k} -  \phi_2^k A^k\Phi_1^{T} Y_1}$ can be upper bounded as 
\begin{eqnarray} \nonumber
\norm{Y_{2,k} -  \phi_2^k A^k\Phi_1^{T} Y_1} &=& \norm{\phi_2^k I_{2,k}^T - \phi_2^k A^kI_1 + E I_1} \\\nonumber
&\leq& \norm{\phi_2^k I_{2,k}^T - \phi_2^k A^k I_1} + \norm{E I_1}\\  \label{eqn:ubound}
&\leq& (1+\delta) \norm{I_{2,k}^T -  A^kI_1} + \norm{E I_1},
\end{eqnarray}
where the last inequality is derived from Eq. ~(\ref{eqn:JLlemma}). Similarly the term $\norm{Y_{2,k} -  \phi_2^k A^k\Phi_1^{T} Y_1}$ can be lower bounded as 
\begin{eqnarray} \nonumber 
\norm{Y_{2,k} -  \phi_2^k A^k\Phi_1^{T} Y_1} &= &\norm{\phi_2^k I_{2,k}^T - \phi_2^k A^kI_1 + E I_1} \\  \nonumber 
&=& \norm{\phi_2^k I_{2,k}^T - \phi_2^k A^kI_1 - (-E I_1)} \\ \label{eqn:triainequality}
&\geq& |\norm{\phi_2^k I_{2,k}^T - \phi_2^k A^kI_1}  - \norm{E I_1}| \\  \label{eqn:lbound}
&\geq& |(1-\delta) \norm{I_{2,k}^T -  A^kI_1} - \norm{E I_1} |,
\end{eqnarray}
where Eq. (\ref{eqn:triainequality}) follows from $\norm{x-y} \geq |\norm{x}-\norm{y}|$, and Eq. (\ref{eqn:lbound}) is derived from Eq. ~(\ref{eqn:JLlemma}). The term $\norm{EI_1}$ in Eqs.~(\ref{eqn:ubound}) and (\ref{eqn:lbound})  can also be bounded as 
\begin{eqnarray}  \nonumber 
\norm{EI_1}&= &\norm{\phi_2^k A^k I_1 -\phi_2^k A^k\Phi_1^T \Phi_1 I_1}\\  \nonumber 
& = &\norm{\phi_2^k A^k(\Phi_1^T \Phi_1 I_1 -  I_1)}  \\  \label{eqn:phicompact}
& \leq& \norm{A^k(\tilde{I}_1 -  I_1) } \\ \nonumber
& \leq & \norm{A^k} \sum_{p = k-w_y}^{k+w_y}\norm{\tilde{I}_{1,p} -  I_{1,p}}  \\  \label{eqn:E_bound}
&=&  \sigma_{max}(A^k) \sum_{p = k-w_y}^{k+w_y}\norm{\tilde{I}_{1,p} -  I_{1,p}}  =  \eta_k,
\end{eqnarray}
where Eq.~(\ref{eqn:phicompact}) follows from $\norm{\Phi x} \leq \norm{x}$, as $\Phi$ is a non expanding operator \cite{Baraniuk_JLlemma} and $\tilde{I}_{1} = \Phi_1^T \Phi_1 I_1$ is the pre-image of $I_1$.  $\sigma_{max}(A^k)$ in Eq.~(\ref{eqn:E_bound}) denotes the largest singular value of $A^k$. The summation in Eq.~(\ref{eqn:E_bound}) is carried out from rows $k-w_y$ to $k+w_y$ as the search window is usually bounded, where $w_y$ is the admissible search size along the vertical direction. Combining Eq.~(\ref{eqn:ubound}), Eq.~(\ref{eqn:lbound}) and Eq.~(\ref{eqn:E_bound}), and by taking squares we get  for each row of pixels
\begin{equation} \label{eqn:boundperrow}
\left( |(1-\delta)\norm{I_{2,k}^T -  A^k I_1} - \eta_k| \right )^2  \leq \sqnorm{Y_{2,k} -  \phi_2^k A^k\Phi_1^T Y_1} \leq \left ( (1+\delta) \norm{I_{2,k}^T -  A^k I_1} + \eta_k\right )^2.
\end{equation}
Adding the second and third inequality terms of  Eq.~(\ref{eqn:boundperrow}) for all rows $k = 1, 2,\cdots,N_1$ results in 
 \begin{dgroup*} \nonumber
 \begin{dmath*}
E_d(\mathcal{M}) = \sum_{k=1}^{N_1}\sqnorm{Y_{2,k} -  \phi_2^k A^k \Phi_1^T Y} 
\end{dmath*}
 \begin{dmath*}
 \leq (1+\delta)^2  \sum_{k=1}^{N_1}\sqnorm{ I_{2,k}^T -  A^k I_1} +\sum_{k=1}^{N_1} \eta_k^2 + \sum_{k=1}^{N_1} 2(1+\delta) \eta_k \norm{ I_{2,k}^T -  A^k I_1} 
\end{dmath*}
 \begin{dmath*}
\leq (1+\delta)^2 \tilde{E}_d(\mathcal{M})  + \left ( \sum_{k=1}^{N_1} \eta_k \right)^2 + 2(1+\delta) \underbrace{\sum_{k=1}^{N_1} \eta_k}_{\eta} \underbrace{\sum_{k=1}^{N_1} \norm{ I_{2,k}^T -  A^k I_1}}_{\alpha}
\end{dmath*}
 \begin{dmath*}
= (1+\delta)^2 \tilde{E}_d(\mathcal{M})  + \eta^2 + 2(1+\delta) \eta \alpha
 \end{dmath*}
 \begin{dmath} \label{eqn:dataallpixel_ubound}
=  (1+\delta)^2 \tilde{E}_d(\mathcal{M})  + C_u,
\end{dmath}
\end{dgroup*}
where
\begin{equation} \label{eqn:term_Cu} 
C_u = \eta^2 + 2(1+\delta) \eta \alpha,
\end{equation}
and 
\begin{equation} \label{eqn:term_eta} 
\eta = \sum_{k=1}^{N_1} \eta_k = \sum_{k=1}^{N_1} \sigma_{max}(A^k) \sum_{p = k-w_y}^{k+w_y}\norm{\tilde{I}_{1,p} -  I_{1,p}}.
\end{equation} 
In a similar way, from the first and second inequality terms of Eq.~(\ref{eqn:boundperrow}) we get
 \begin{dgroup*} \nonumber
 \begin{dmath*}
E_d(\mathcal{M}) = \sum_{k=1}^{N_1}\sqnorm{Y_{2,k} -  \phi_2^k A^k \Phi_1^T Y} 
\end{dmath*}
 \begin{dmath*}
 \geq  \sum_{k=1}^{N_1} \left\{ \left | (1-\delta)^2\sqnorm{ I_{2,k}^T -  A^k I_1} + \eta_k^2 - 2 (1-\delta) \eta_k \norm{ I_{2,k}^T -  A^k I_1} \right | \right\} 
\end{dmath*}
 \begin{dmath*}
\geq \left| \sum_{k=1}^{N_1} (1-\delta)^2\sqnorm{ I_{2,k}^T -  A^k I_1} + \sum_{k=1}^{N_1} \eta_k^2 - \sum_{k=1}^{N_1} 2(1-\delta) \eta_k\norm{ I_{2,k}^T -  A^k I_1} \right| 
\end{dmath*}
 \begin{dmath*}
 \geq | (1-\delta)^2  \tilde{E}_d(\mathcal{M}) - \left( \sum_{k=1}^{N_1} \eta_k \right )^2 -  2(1-\delta) \underbrace{\sum_{k=1}^{N_1}\eta_k}_{\eta} \underbrace{\sum_{k=1}^{N_1}\norm{ I_{2,k}^T -  A^k I_1}}_{\alpha} | 
 \end{dmath*}
 \begin{dmath*}
=  |(1-\delta)^2 \tilde{E}_d(\mathcal{M})  - (\eta^2 +2(1-\delta) \eta \alpha) | 
\end{dmath*}
 \begin{dmath} \label{eqn:dataallpixel_lbound} 
=  |(1-\delta)^2 \tilde{E}_d(\mathcal{M})  - C_l|,
\end{dmath}
\end{dgroup*}
where
\begin{equation} \label{eqn:term_Cl} 
C_l = \eta^2 + 2(1-\delta) \eta \alpha,
\end{equation}
and $\eta$ is given in Eq.~(\ref{eqn:term_eta}).

\end{proof}

\begin{prop}
The penalty of estimating the correlation from measurements monotonically decreases when the measurement rate $K/N$ increases. It further becomes negligible at high measurement rate.
\end{prop}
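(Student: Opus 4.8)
The plan is to make explicit how the bound of Proposition~\ref{prop:one} depends on the number of measurements per row $M$ (equivalently on the measurement rate $K/N = M/N_2$) and to check that every $M$-dependent term is non-increasing in $M$, whereas $\alpha = \sum_k \|I_{2,k}^T - A^k I_1\|_2$ and $\tilde E_d(\mathcal{M}) = \sum_k \sqnorm{I_{2,k}^T - A^k I_1}$ do not depend on $M$ at all. Combining the two inequalities of Proposition~\ref{prop:one}, once $M$ is large enough that $C_l \le (1-\delta)^2 \tilde E_d(\mathcal{M})$ (which the argument below shows always eventually holds), the estimation penalty obeys $|E_d(\mathcal{M}) - \tilde E_d(\mathcal{M})| \le (2\delta + \delta^2)\,\tilde E_d(\mathcal{M}) + C_u$. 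Hence it suffices to show that $\delta$ and $C_u$ decrease as $M$ grows.

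First I would invoke the JL scaling: for a prescribed embedding-failure probability the JL lemma needs $M = \mathcal{O}(\delta^{-2}\log|\mathcal{P}|)$, so $\delta = \mathcal{O}(\sqrt{\log|\mathcal{P}|/M})$ is a decreasing function of $M$ and $2\delta + \delta^2 \to 0$. Next I would analyse $\eta = \sum_k \sigma_{\max}(A^k)\sum_p \norm{\tilde I_{1,p} - I_{1,p}}$ with $\tilde I_1 = \Phi_1^T\Phi_1 I_1$. Since the rows of $\phi_1^p$ are orthonormal, $(\phi_1^p)^T\phi_1^p$ is the orthogonal projector onto the $M$-dimensional row space of $\phi_1^p$, so $\norm{\tilde I_{1,p} - I_{1,p}}$ is the norm of the component of $I_{1,p}$ lying in the $(N_2 - M)$-dimensional orthogonal complement. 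For the nested structured constructions used in our experiments (the first $M$ rows of a fixed DCT/FFT basis) the row spaces increase with $M$, so this residual — and therefore $\eta$ — is non-increasing in $M$; for a uniformly random $M$-dimensional measurement subspace the same holds in expectation, since $\mathbb{E}\sqnorm{\tilde I_{1,p} - I_{1,p}} = (1 - M/N_2)\sqnorm{I_{1,p}}$ decreases linearly to $0$ as the rate tends to $1$. Consequently $C_u = \eta^2 + 2(1+\delta)\eta\alpha$, a combination of the non-increasing quantities $\eta$ and $(1+\delta)$ with the fixed quantity $\alpha$, is non-increasing in $M$ (and $C_l \le C_u$). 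Therefore the penalty bound $(2\delta + \delta^2)\tilde E_d(\mathcal{M}) + C_u$ decreases monotonically with the measurement rate, which proves the first assertion.

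For the high-rate claim I would let $M \to N_2$, i.e. $K/N \to 1$. Then $\Phi_1$ is square and orthonormal, $\Phi_1^T\Phi_1 = \mathrm{Id}$, hence $\tilde I_1 = I_1$ and $\eta = 0$, giving $C_u = C_l = 0$; moreover the JL map is then an exact isometry, so $\delta$ can be taken arbitrarily small. Proposition~\ref{prop:one} collapses to $\tilde E_d(\mathcal{M}) \le E_d(\mathcal{M}) \le \tilde E_d(\mathcal{M})$, i.e. $E_d(\mathcal{M}) = \tilde E_d(\mathcal{M})$, so the penalty vanishes; together with the monotonicity above this shows it is negligible at high measurement rate.

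The step I expect to be delicate is the monotonicity of $\eta$: it rests on the measurement subspace genuinely growing with $M$, which is exactly true for the nested structured sensing matrices used here but, for generic random ensembles, can only be asserted in expectation. A secondary subtlety is that $\delta$ enters only through a high-probability JL guarantee on the finite set $\mathcal{P}$, so the monotone-decrease conclusion should be read as holding on the (high-probability) event that this embedding succeeds at the relevant distortion level; it is worth stating the result with that caveat.
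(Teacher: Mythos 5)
Your proof is correct and follows essentially the same route as the paper's: both arguments track how the bounds of Proposition~\ref{prop:one} depend on the number of measurements, observing that $\tilde I_1 = \Phi_1^T\Phi_1 I_1 \to I_1$ as $M$ grows so that $\eta$, and hence $C_l$ and $C_u$, shrink to zero. Your version is in fact more complete than the paper's one-paragraph argument --- you additionally account for the multiplicative $(1\pm\delta)^2$ factors via the JL scaling $\delta = \mathcal{O}(\sqrt{\log|\mathcal{P}|/M})$, and you correctly flag that strict monotonicity of $\eta$ requires nested measurement subspaces (or holds only in expectation for random ensembles), a caveat the paper glosses over; note only that, like the paper, what you actually establish is monotone decrease of the \emph{bound} on the penalty rather than of the penalty itself.
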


\begin{proof}
In Proposition ~\ref{prop:one} we have shown that the difference between the data cost functions estimated from compressed measurements $E_d(\mathcal{M})$ and images $\tilde{E}_d(\mathcal{M})$ is lower and upper bounded by errors $C_l$ and $C_u$, respectively given in Eq.~(\ref{eqn:term_Cl}) and Eq.~(\ref{eqn:term_Cu}). The error $\eta = \sum_{k=1}^{N_1} \eta_k \propto  \sum_{k=1}^{N_1} \norm{\tilde{I}_{1,k} -  I_{1,k}} $ (see Eq.~(\ref{eqn:term_eta})) decreases with increasing measurement rate because $(\phi_1^k)^T \phi_1^k$ becomes an orthogonal projection operator, and $\tilde{I}_1 = \Phi_1^T\Phi_1I_1$ becomes arbitrarily close to $I_1$ when the number of measurements increases. Therefore, the errors $C_l$ and $C_u$ decrease as the measurement rate increases and when sufficient number of measurements are taken the errors $C_l$ and $C_u$ become negligible, i.e., $E_d(\mathcal{M}) \approx \tilde{E}_d(\mathcal{M})$. 
\end{proof}

Due to the error between the cost functions $E_d(\mathcal{M})$ and $\tilde{E}_d(\mathcal{M})$, the solution $\mathcal{M}$ estimated from the linear measurements is not accurate especially at low measurement rates. The solution of the correlation estimation problem in the compressed domain might thus be quite far from the actual correlation between images. However, as the number of measurements increases the approximation in the compressed domain becomes more accurate and the solution of the correlation estimation tends to the actual correlation between original images. 

\section{Experimental results} \label{sec:results}

\subsection{Setup}

We analyze the performance of the correlation estimation in both stereo and video imaging applications. The random projections are computed using a scrambled Fourier measurement matrix where the scrambled operator is a diagonal matrix with entries $\pm 1$ taken from an i.i.d. Bernoulli random variable with equal probability \cite{Gan_JLlemma}. In all our experiments, we sample both images using a same measurement rate. The correlation is estimated without prior reconstruction of images by minimizing the objective function in Eq.~(\ref{eqn:final_energy}).

For the stereo imaging case, we evaluate the disparity estimation performance in two natural image sets namely \emph{Tsukuba} and \emph{Venus}\footnote{Available in http://vision.middlebury.edu/stereo/data/scenes2001/}  \cite{Scharstein}.  These datasets have been captured by a camera array where the different viewpoints correspond to translating the camera along one of the image coordinate axis. In such a scenario the motion of objects due to the viewpoint  change is restricted to the horizontal direction with no motion along the vertical direction. The disparity estimation is thus a one-dimensional search problem, and the smoothness and data cost functions are modified accordingly by assuming that $\bold{m}^v = 0$. The size of the search windows used in our experiments are 16 pixels for Tsukuba and 20 pixels for Venus. In our experiments we estimate disparity in both dense (per pixel) and block settings, where the block size is fixed to $4\times 4$ pixels. 

In the video scenario, we analyze the motion estimation accuracy in two synthetic scenes, namely \emph{Yosemite} and \emph{Grove},  and one natural scene \emph{Mequon}\footnote{Available in http://vision.middlebury.edu/flow}. The Grove and Mequon datasets are resampled to a resolution of $160 \times 120$ pixels using bilinear filters. The size of search windows is of $\pm3$ pixels in both horizontal and vertical directions. For the sake of simplicity, we estimate a motion field for blocks of pixels with size of $4\times 4$ pixels and not motion vectors for each pixel.

The accuracy of the correlation estimation is evaluated by comparing to groundtruth information and to the correlation estimated from the reconstructed images. We propose another representation of the accuracy of the correlation by discussing the quality of the second view $\hat{I}_2$ that is reconstructed by prediction of the first frame according to estimated correlation. We then analyze the influence of the sampling matrix and the effect of measurement quantization on the correlation estimation performance. We finally show the importance of accurate correlation estimation in a novel joint reconstruction algorithm where images are reconstructed from measurements while satisfying sparsity constraints as well as consistency with the correlation information. Note that in practice, the groundtruth correlation model and the original images are not available a priori to estimate an optimal regularization parameter $\lambda$. In such cases, the regularization parameter $\lambda$ can be estimated based on learning from a set of training images or using the automated method proposed in \cite{GC_regparam}.  In our experiments, we however select the parameter $\lambda$ based on trial and error experiments. 

\subsection{Disparity estimation performance} \label{results:disp}

We first illustrate disparity maps for the Venus dataset where the compressed data have been obtained with a different measurement matrix for each image, i.e., $\Phi_1 \neq \Phi_2$. Fig.~\ref{Fig:venu_deptherror}(b) and Fig.~\ref{Fig:venu_deptherror}(d) show the disparity map from a measurement rate $0.2$ and $0.7$ respectively. Fig.~\ref{Fig:venu_deptherror}(c) and Fig.~\ref{Fig:venu_deptherror}(e) represent the corresponding disparity errors. Comparing the results with the groundtruth given in Fig.~\ref{Fig:venu_deptherror}(a) we see that at low measurement rate (corresponding to $0.2$) we estimate a coarse version of the disparity map. Quantitatively the disparity error with respect to the groundtruth is found out to be $41\%$ when measured as the percentage of pixels where the absolute error is greater than one \cite{Scharstein} as shown in Fig.~\ref{Fig:venu_deptherror}(c). At higher rate, the disparity map is more accurate and the disparity error drops below $11\%$.

  \begin{figure*}[h!]
  \centering
 $\begin{array}{@{\hspace{-0.25 in}} c@{\hspace{-0.25 in}}c @{\hspace{-0.25 in}} c@{\hspace{-0.25 in}} c@{\hspace{-0.25 in}} c} \multicolumn{1}{l}{\mbox{}} &  \multicolumn{1}{l}{\mbox{}} &\multicolumn{1}{l}{\mbox{}} &  \multicolumn{1}{l}{\mbox{}} &\multicolumn{1}{l}{\mbox{}} \\
   \epsfxsize=1.6in \epsffile{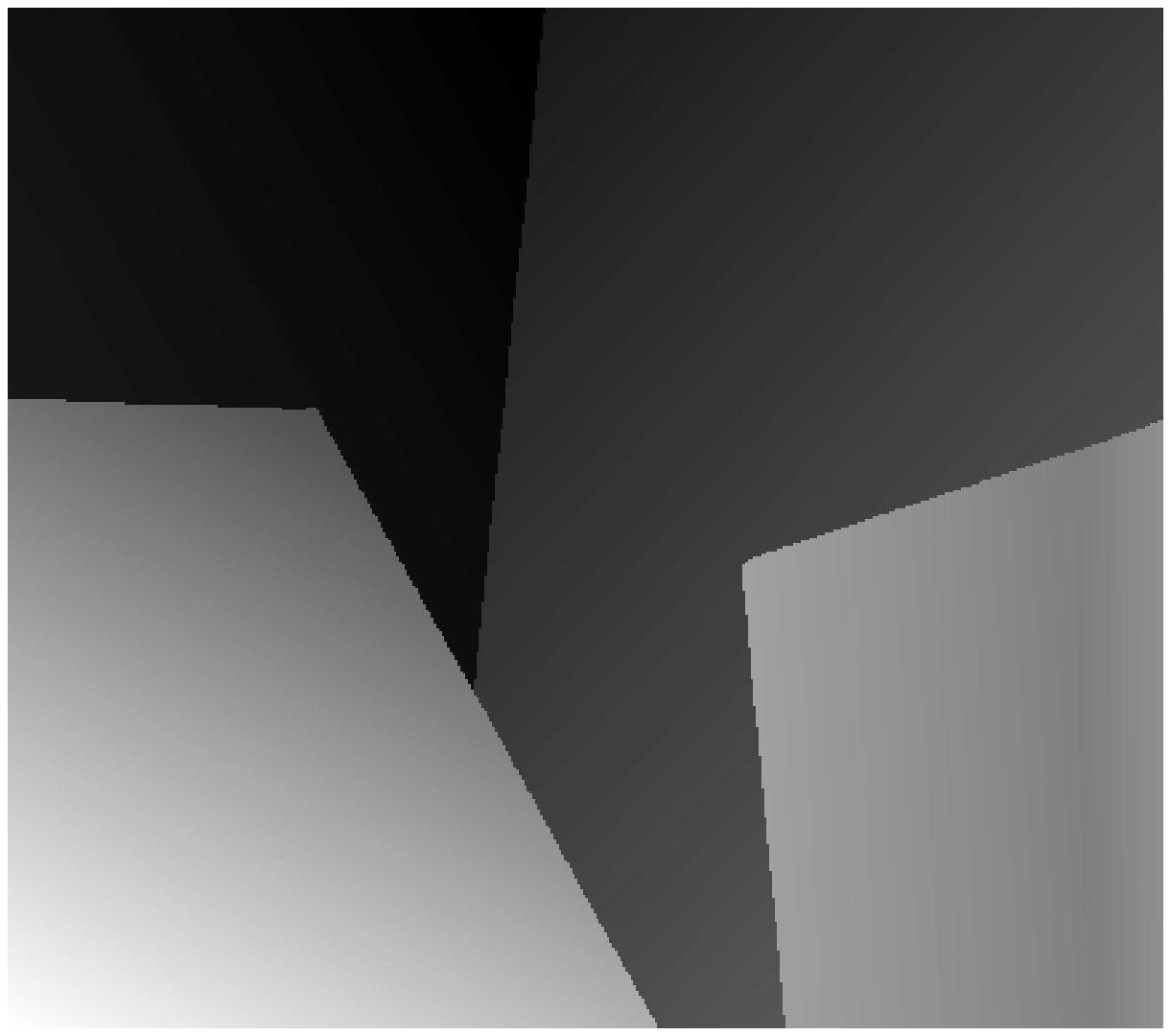} & \epsfxsize=1.6in \epsffile{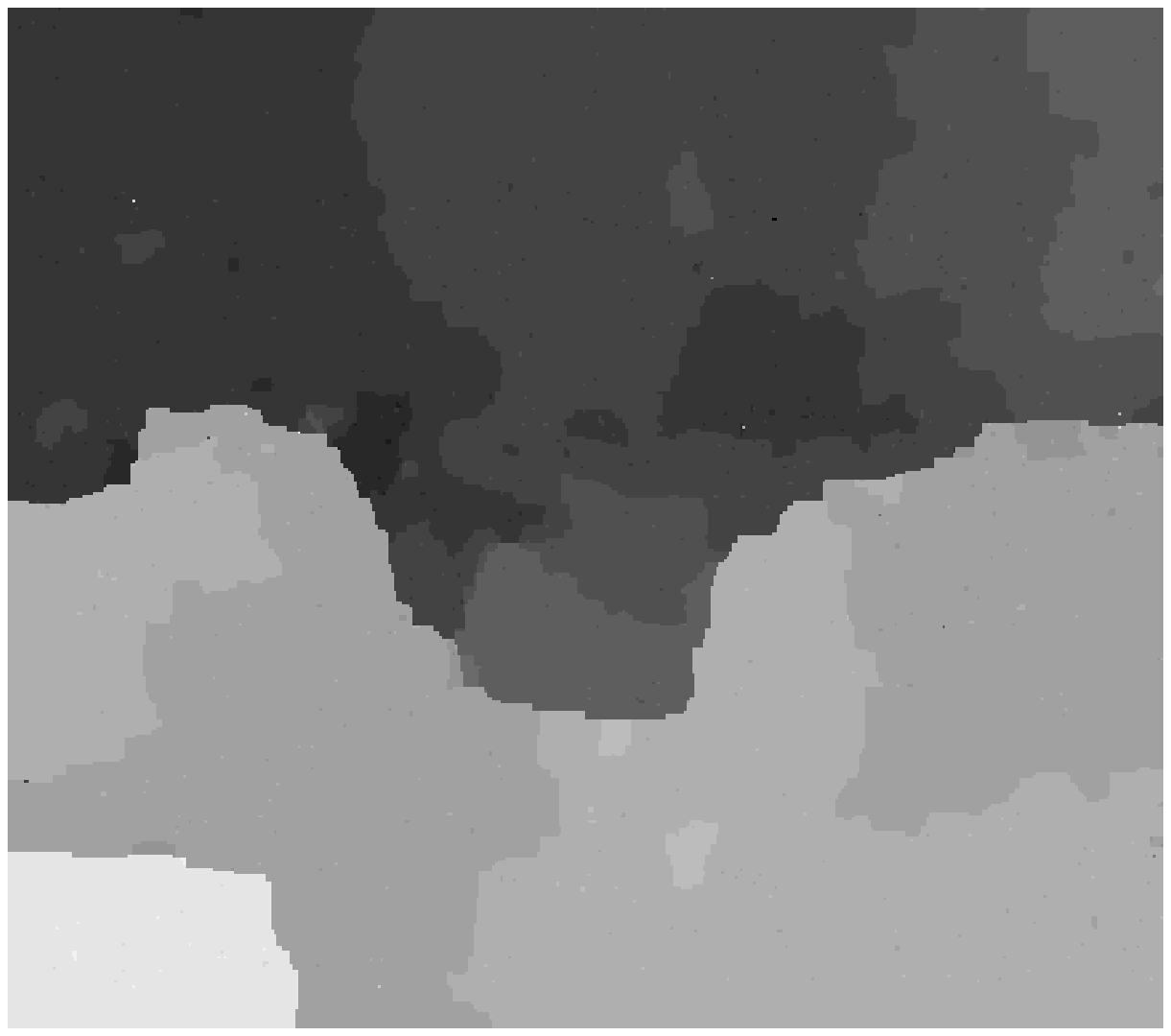} & \epsfxsize=1.6in \epsffile{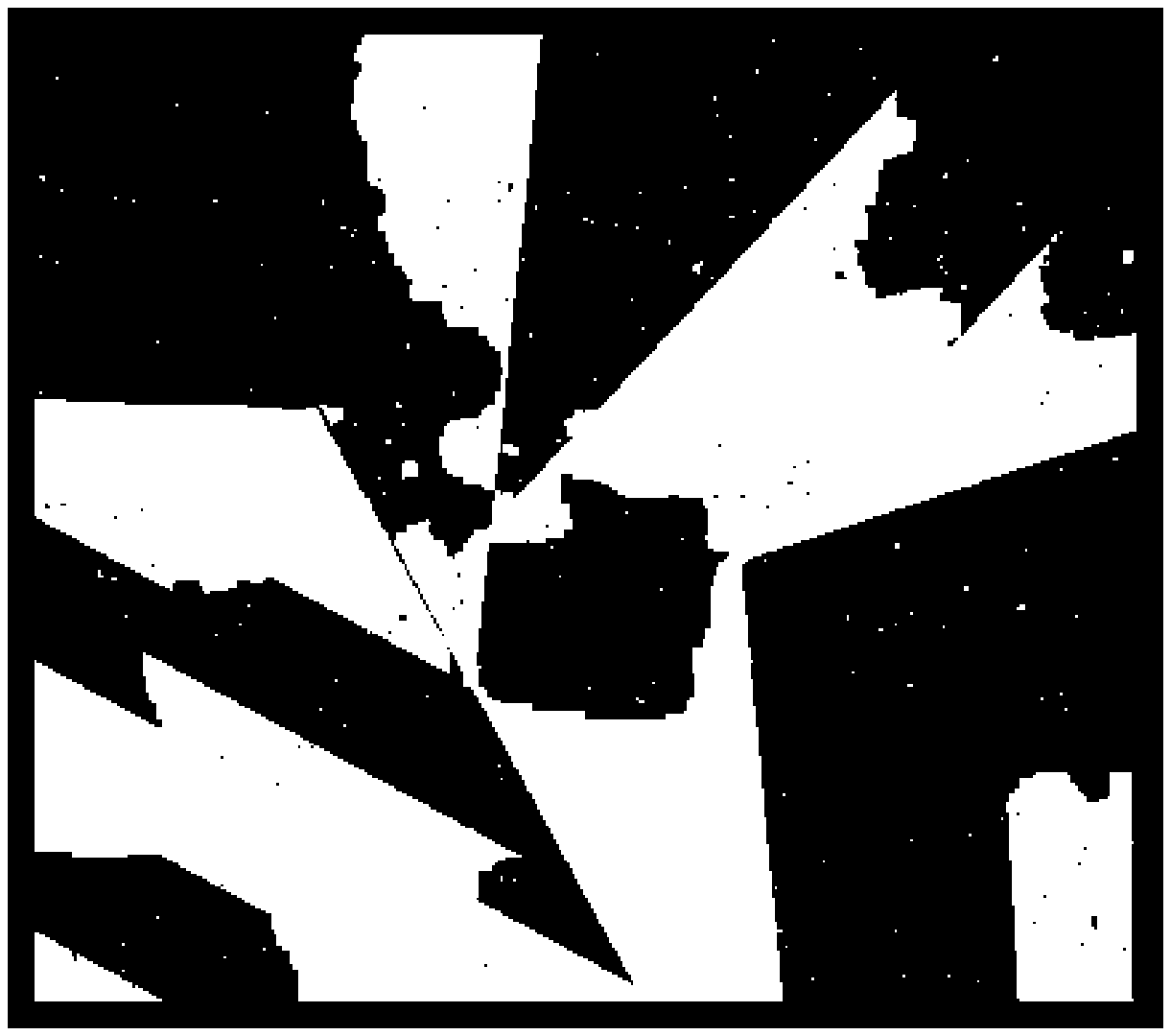} & \epsfxsize=1.6in \epsffile{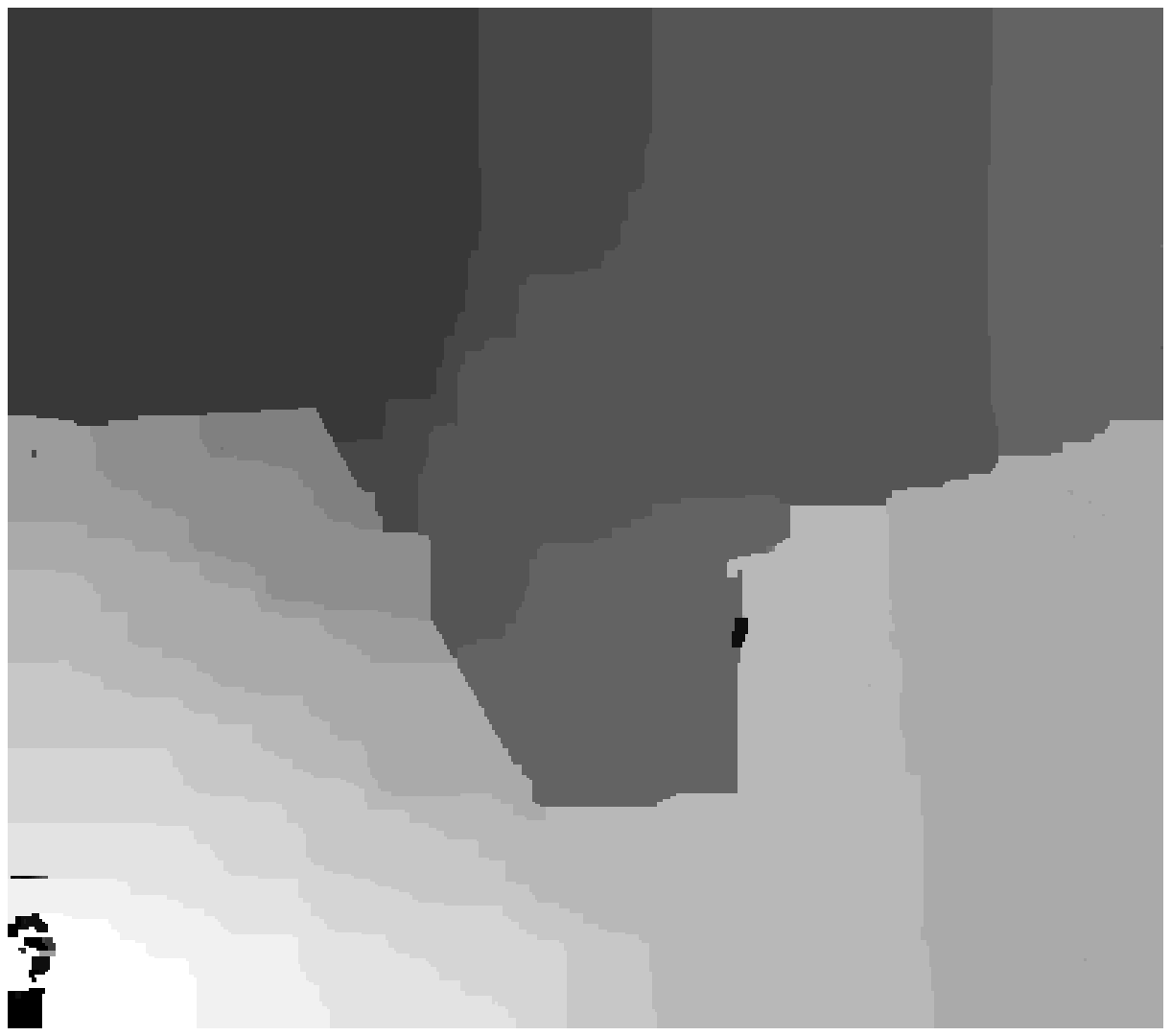} & \epsfxsize=1.6in \epsffile{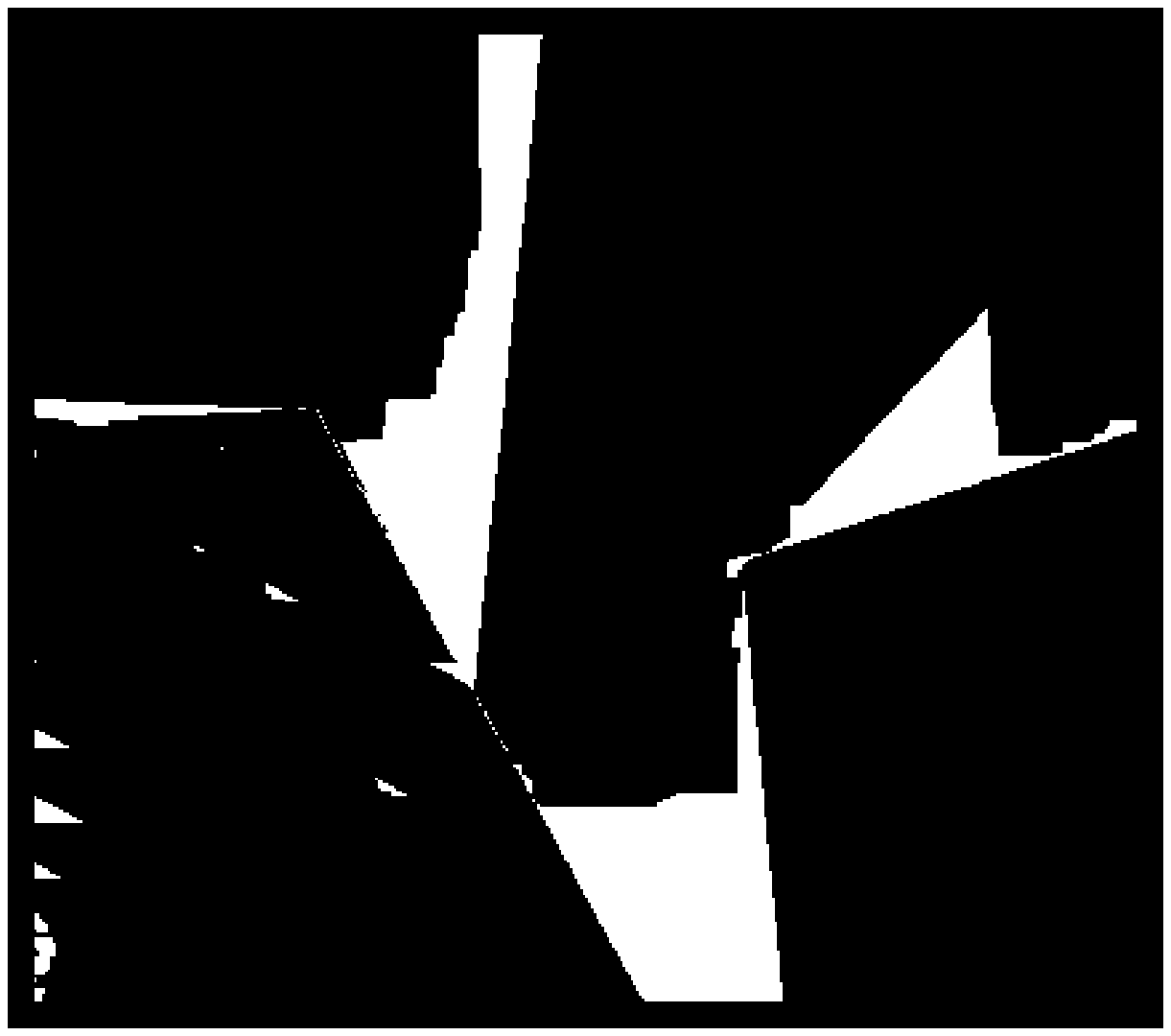} \vspace{-0.2in} \\
   \mbox{(a) $\bold{M}^h $ } & \mbox{(b) $\bold{m}^h $} & \mbox{(c) $|\bold{M}^h-\bold{m}^h| >1$} & \mbox{(d) $\bold{m}^h$} & \mbox{(e) $|\bold{M}^h-\bold{m}^h| >1$}
  \end{array}$
 \caption{Comparison of the estimated disparity image with respect to groundtruth information at measurement rates 0.2 and 0.7 in the Venus dataset. (a) Groundtruth disparity image $\bold{M}^h$; (b) computed dense disparity image $\bold{m}^h$ at measurement rate $0.2$; (c) disparity error at rate 0.2.  The pixels with absolute error greater than one is marked in white. The percentage of white pixels is $41\%$. (d) Computed dense disparity image $\bold{m}^h$ at measurement rate $0.7$; (e) disparity error at rate 0.7. The percentage of white pixels is $10.7 \%$. }
  \label{Fig:venu_deptherror}
  \end{figure*}
  

  \begin{figure*}[h!]
  \centering
 $\begin{array}{@{\hspace{-0.2 in}} c@{\hspace{-0.2 in}}c @{\hspace{-0.2 in}} c@{\hspace{-0.2 in}} c} \multicolumn{1}{l}{\mbox{}} &  \multicolumn{1}{l}{\mbox{}} &\multicolumn{1}{l}{\mbox{}} &  \multicolumn{1}{l}{\mbox{}} \\
  \epsfxsize=1.8in {\epsffile{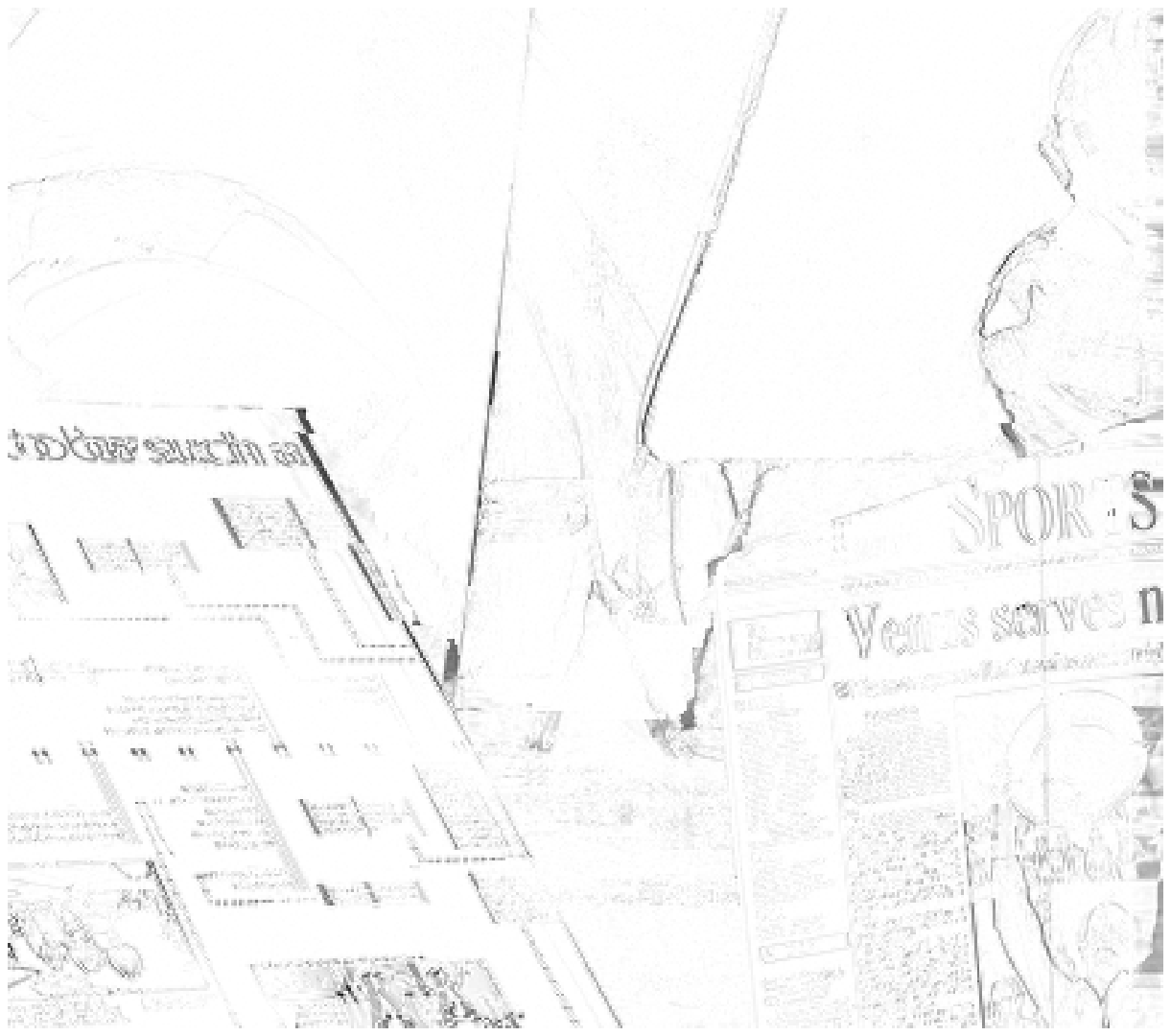}} & \epsfxsize=1.8in {\epsffile{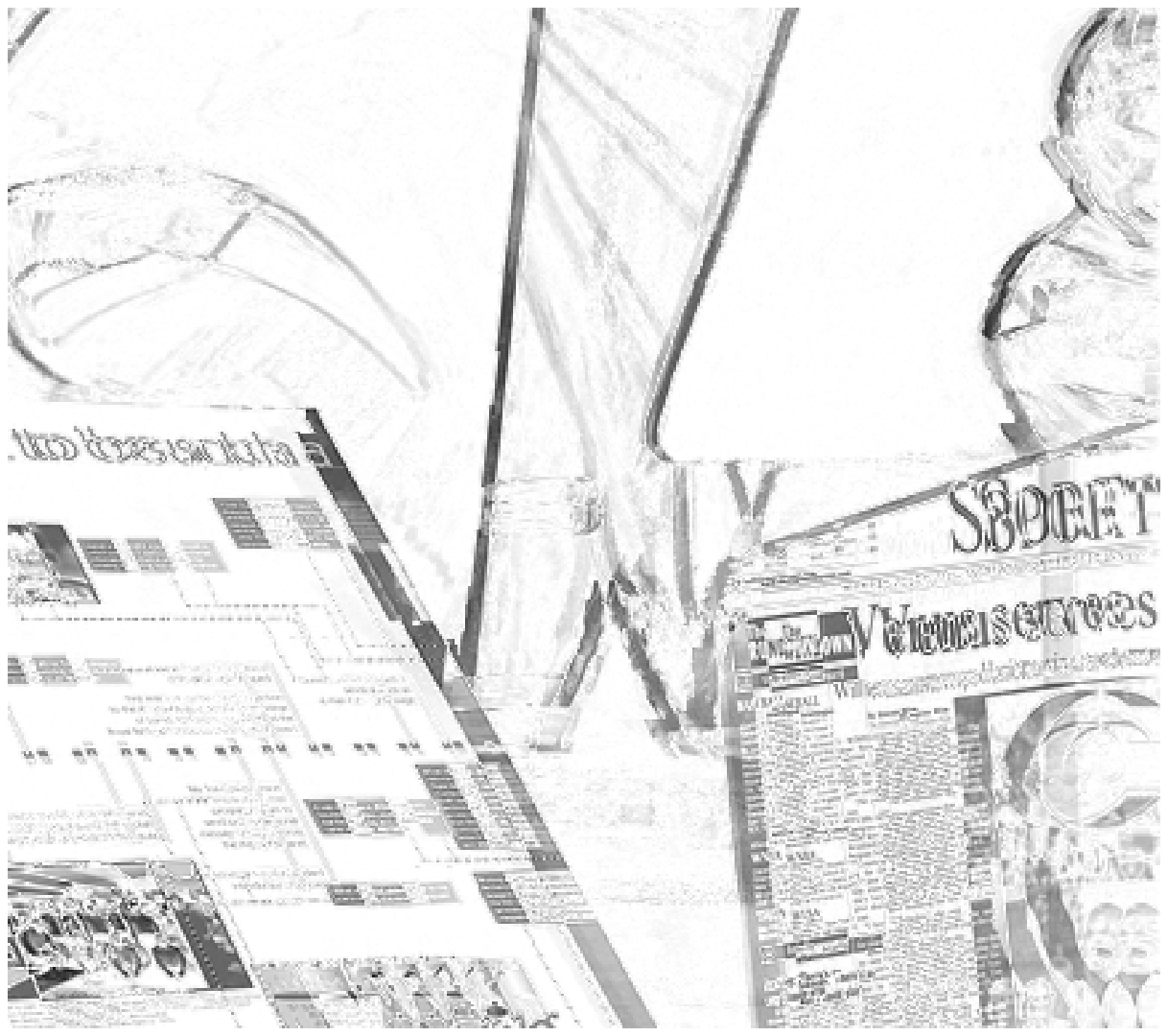}} &  \epsfxsize=1.8in {\epsffile{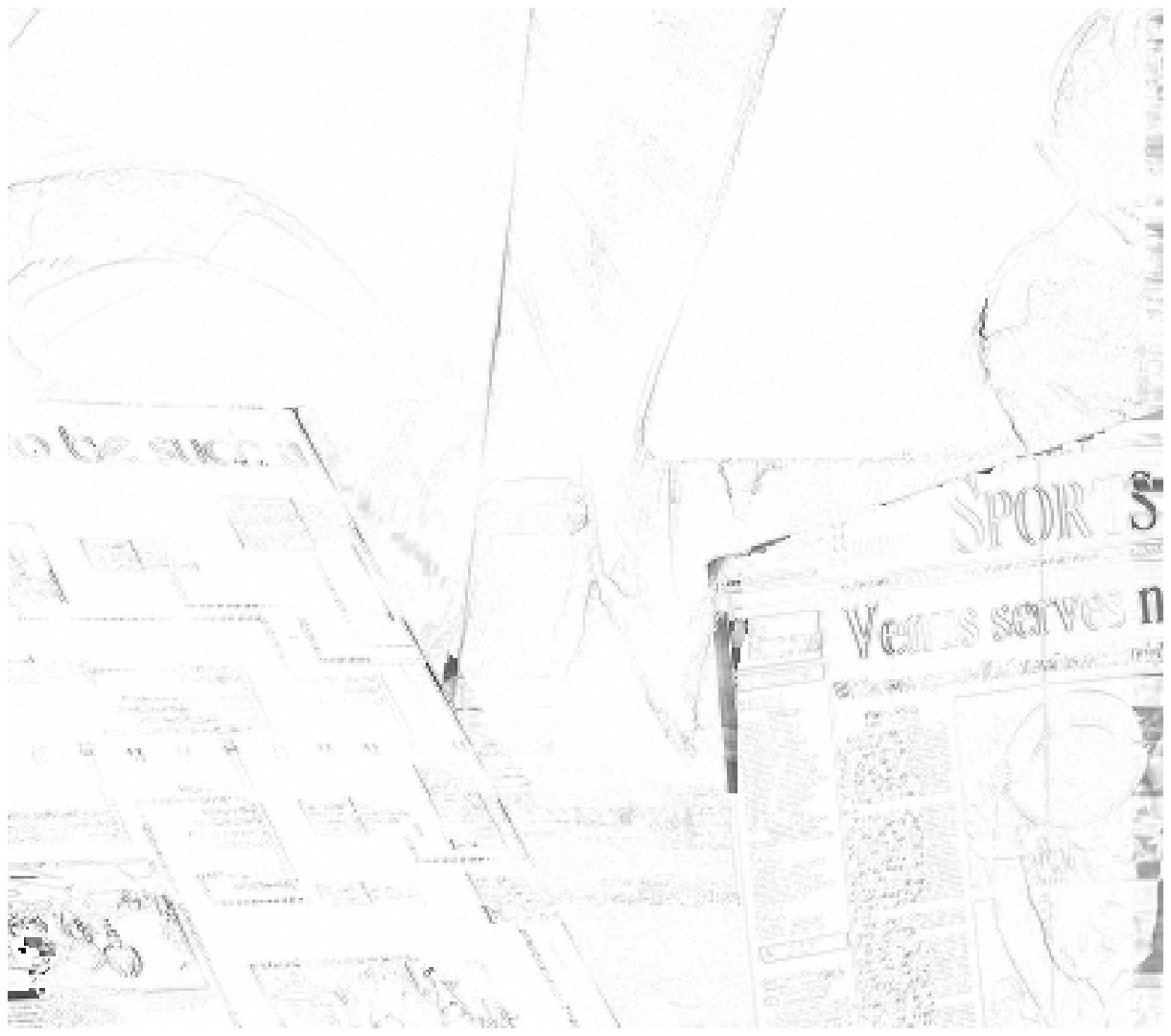}} & \epsfxsize=1.8in {\epsffile{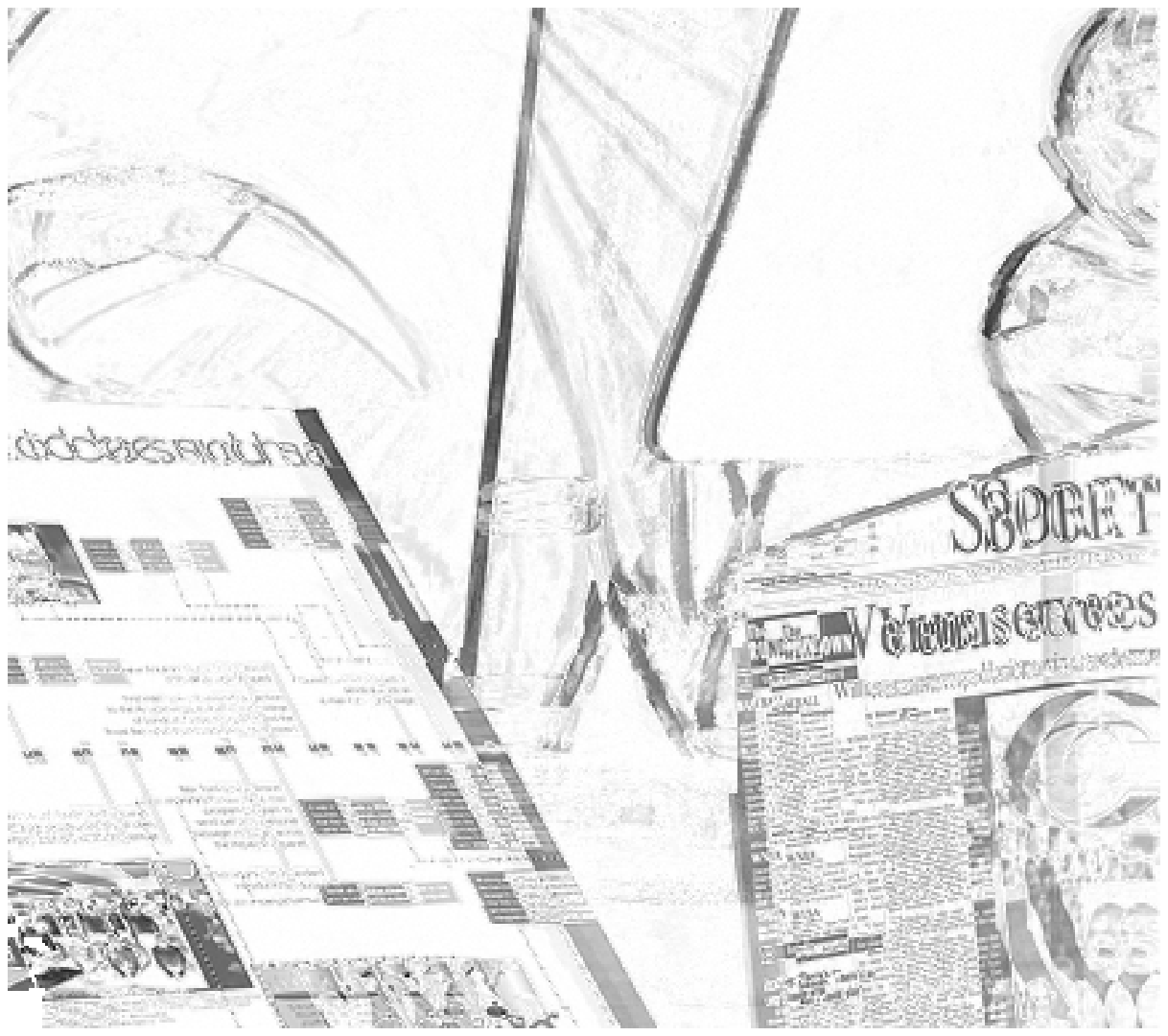}} \vspace{-0.2in} \\
  \mbox{(a) MSE: 205 } & \mbox{(b) MSE: 1221 } &  \mbox{(c)  MSE: 101 } & \mbox{(d) MSE: 1219}
  \end{array}$
\caption{Evaluating the accuracy of disparity image in Fig.~\protect \ref{Fig:venu_deptherror}(b) and Fig.~\protect \ref{Fig:venu_deptherror}(d) in terms of image prediction quality for the Venus dataset. The disparity images in Fig.~\protect \ref{Fig:venu_deptherror}(b) and Fig.~\protect \ref{Fig:venu_deptherror}(d) are used to predict the image $\hat{I}_2$ at measurement rates 0.2 and 0.7 respectively.  (a) Inverse prediction $1-|\hat{I}_2-I_2|$  at  a measurement rate of $0.2$; (b) inverse prediction $1-|\hat{I}_2-I_1|$  at  a measurement rate of $0.2$; (c) inverse prediction $1-|\hat{I}_2-I_2|$  at  a measurement rate of $0.7$; (d) inverse prediction $1-|\hat{I}_2-I_1|$  at  a measurement rate of $0.7$. 
The error is inverted, so that the white pixels correspond to no error. }
 \label{Fig:venu_rquality_0.2}
 \end{figure*}

We then show the quality of the reconstruction of the second image that is predicted from the first image using the correlation estimate. When a coarse disparity map $\bold{m}^h$ (i.e., estimated at low measurement rate) is used for image prediction the resulting predicted image $\hat{I}_2$ is closer to $I_2$ than $I_1$ (see Fig.~\ref{Fig:venu_rquality_0.2}(a) and  Fig.~\ref{Fig:venu_rquality_0.2}(b)  respectively). We observe that the mean square error (MSE) between the predicted image $\hat{I}_2$ and $I_2$ is smaller than the error between $\hat{I}_2$ and $I_1$, which confirms the benefit of the disparity estimate in the prediction. When the measurement rate increases the quality of the disparity map improves and the quality of the predicted image $\hat{I}_2$ also improves substantially as it can be observed in Fig.~\ref{Fig:venu_rquality_0.2}(c)  and Fig.~\ref{Fig:venu_rquality_0.2}(d) where the measurement rate is set to $0.7$.

\begin{figure}[t]
\centering
    \epsfxsize=2.8in \epsffile{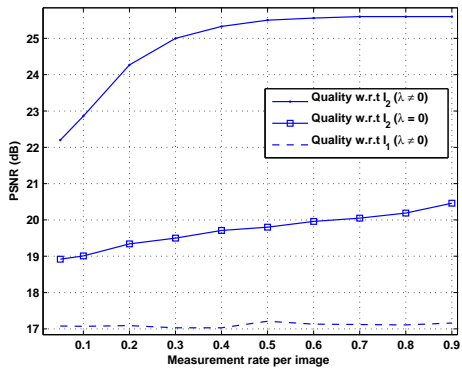}
 \caption{Comparison of the quality of predicted image $\hat{I}_2$ with respect to $I_2$ and $I_1$ with and without regularization, i.e., $\lambda \neq 0$ and $\lambda = 0$ in Eq.~(\protect \ref{eqn:final_energy}) respectively in the Tsukuba dataset. The image prediction is carried out using dense disparity image. }
  \label{Fig:effe_Rend}
  \end{figure}

The next experiments illustrate the benefit of regularization in the disparity estimation problem. Fig.~\ref{Fig:effe_Rend} plots the quality of the predicted image  $\hat{I}_2$ with and without smoothness cost (i.e., $\lambda \neq 0$ and $\lambda = 0$ respectively in Eq.~(\ref{eqn:final_energy}) respectively) for the Tsukuba dataset. It is clear that the quality of $\hat{I}_2$ is improved by enabling the regularization term in our optimization framework. Similar experimental finding is observed for the Venus dataset.  From Fig.~\ref{Fig:effe_Rend} we further observe that the quality of the predicted image $\hat{I}_2$ at a measurement rate $0.05$ is $22.2$ dB (the corresponding disparity error is 39\%), which is approximately $3.5$ dB away from the saturation point or from global minima solution due to influence of the penalty terms $C_l$ and $C_u$ discussed in Section \ref{sec:optimal_property}. As  the measurement rate increases, the influence of the terms $C_l$ and $C_u$ decreases. As a result the quality of predicted image $\hat{I}_2$ increases with the measurement rate and saturates above rates $> 0.5$. In other words, our scheme gives optimal disparity solution at high measurement rate. We also carry out experiments using the same measurement matrix for both images, i.e., $\Phi_1= \Phi_2$. Fig.~\ref{Fig:tsu_prop_indep}(a) and Fig.~\ref{Fig:tsu_prop_indep}(b) compare the PSNR quality of the predicted image $\hat{I}_2$ and the disparity error (DE), respectively with the results obtained with different measurement matrices. It is clear that the prediction image quality and the disparity accuracy improve when different measurement matrices are used as this brings more information from both images to solve the correspondence problem. 

We finally compare our disparity estimation results to a scheme that first reconstructs the images before estimating the disparity map. The images are reconstructed independently from the corresponding measurements by solving a convex optimization problem. We denote this methodology as \emph{disparity from reconstructed images} (DFR). We have tried out two different reconstruction methodologies: (1) DFR-sparsity that consists in minimizing the $l_1$ norm of the sparse coefficients assuming that the image is sparse in a particular orthonormal basis (e.g., a wavelet basis); this problem is solved using GPSR \cite{GPSR}; (2)  DFR-TV that minimizes the TV norm of the reconstructed image; this problem is solved using BPDQ toolbox \cite{bpdqtbx}. The disparity map is then estimated using $\alpha$-expansion mode in Graph Cuts applied on the reconstructed images. Fig.~\ref{Fig:tsu_prop_indep}  shows the comparison of the proposed scheme with the DFR-sparsity and DFR-TV schemes for the Tsukuba dataset. From Fig.~\ref{Fig:tsu_prop_indep}(a) and Fig.~\ref{Fig:tsu_prop_indep}(b) we observe that the performance of our low complexity correlation solution competes with DFR-sparsity scheme. Especially at rates smaller than $0.1$, our scheme performs better than the DFR-sparsity scheme as the poor image reconstruction quality in the DFR-sparsity scheme leads to a bad estimation of disparity map. On the other hand, DFR-TV scheme significantly outperforms our scheme at lower rates due to good reconstruction image quality. However, our scheme estimates the optimal correlation for rates above $0.5$ and thus performs similar to the DFR-TV scheme at high measurement rate but with a complexity that is dramatically smaller as it avoids image reconstruction. In particular, in our experiments we have observed that the running time of Graph Cuts algorithms that estimate the correlation information from linear measurements is approximately same as the one that estimates the correlation information from reconstructed images. The complexity of our correlation estimation scheme stays reasonable due to the efficiency of Graph Cuts algorithms whose complexity is bounded by a low order polynomial \cite{Boykov_GC}. Comparing to the DFR-sparsity and DFR-TV schemes, we save on the complexity corresponding to solving the $l_2$-$l_1$ and $l_2$-{TV} optimization problems respectively. It is however hard to precisely give the order of complexity of solving the $l_2$-$l_1$ and $l_2$-{TV} optimization problems, as it is highly depend on the type of solvers. For some of the popular solvers like GPSR \cite{GPSR} and NESTA \cite{nesta}, the order of complexity is given as $T\mathcal{O}(NlogN)$, where $T$ is the number of iterations and $N$ is the resolution of the image \cite{Montager}. Therefore, comparing to the DFR schemes we  save a complexity of $2T\mathcal{O}(NlogN)$.   


The complexity of the proposed scheme can be further reduced when a disparity value is estimated per block instead of per pixel. Fig.~\ref{Fig:tsu_prop_indep_blk} compares the performance of our scheme with respect to the DFR schemes when disparity is estimated using per block with block size $4 \times 4$. Comparing Fig.~\ref{Fig:tsu_prop_indep_blk} and  Fig.~\ref{Fig:tsu_prop_indep}(a) we see that the relative performances between the schemes remains approximately the same when the disparity image is estimated per block or per pixel. This confirms that the proposed scheme can easily adapt the granularity of disparity estimation without big penalty in order to meet the complexity requirements at the decoder.

\begin{figure*}
\centering
 $\begin{array}{c@{\hspace{0 in}}c@{\hspace{0 in}}} \multicolumn{1}{l}{\mbox{}} &  \multicolumn{1}{l}{\mbox{}}  \\
    \epsfxsize=2.9in \epsffile{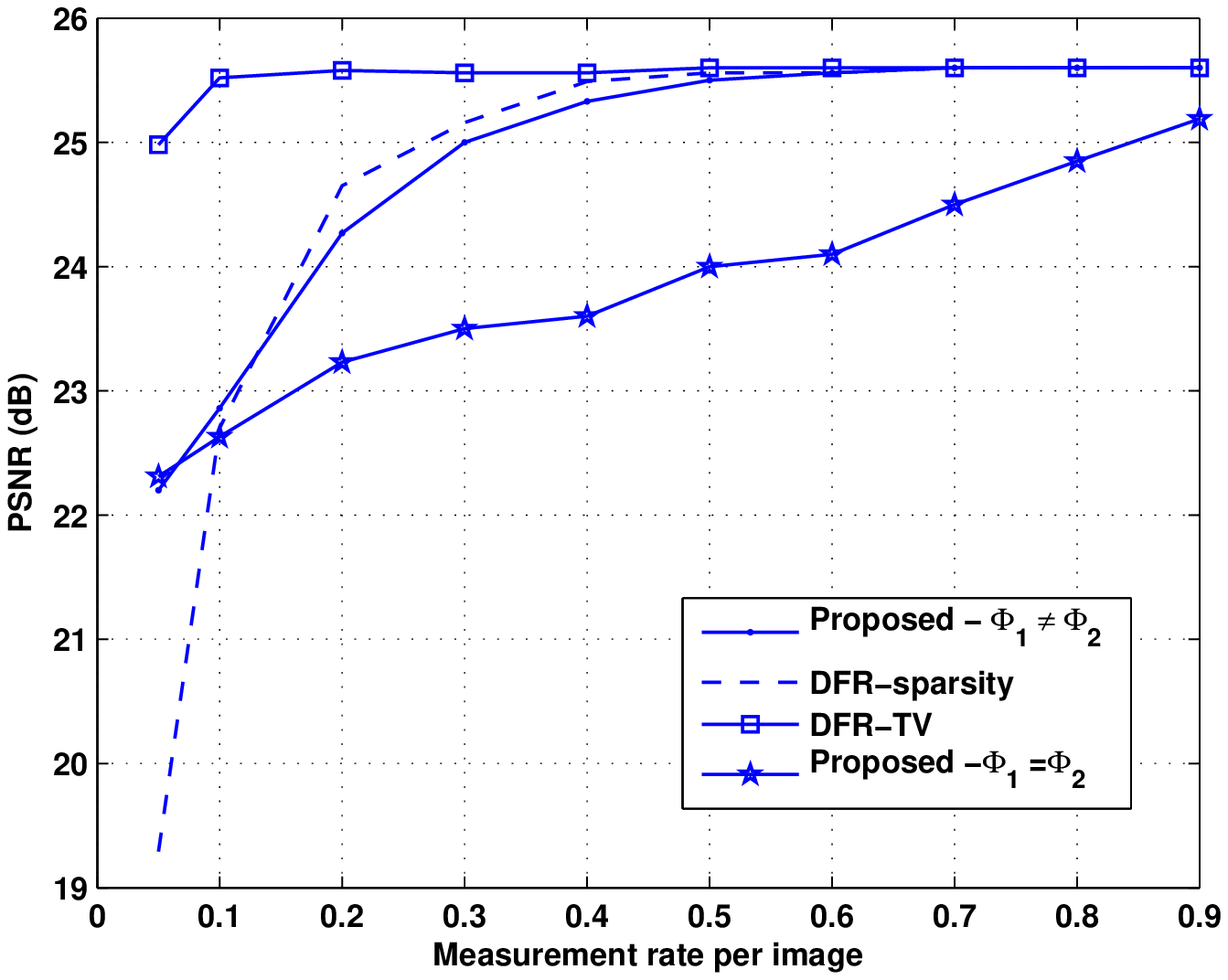}
             & \epsfxsize=2.9in \epsffile{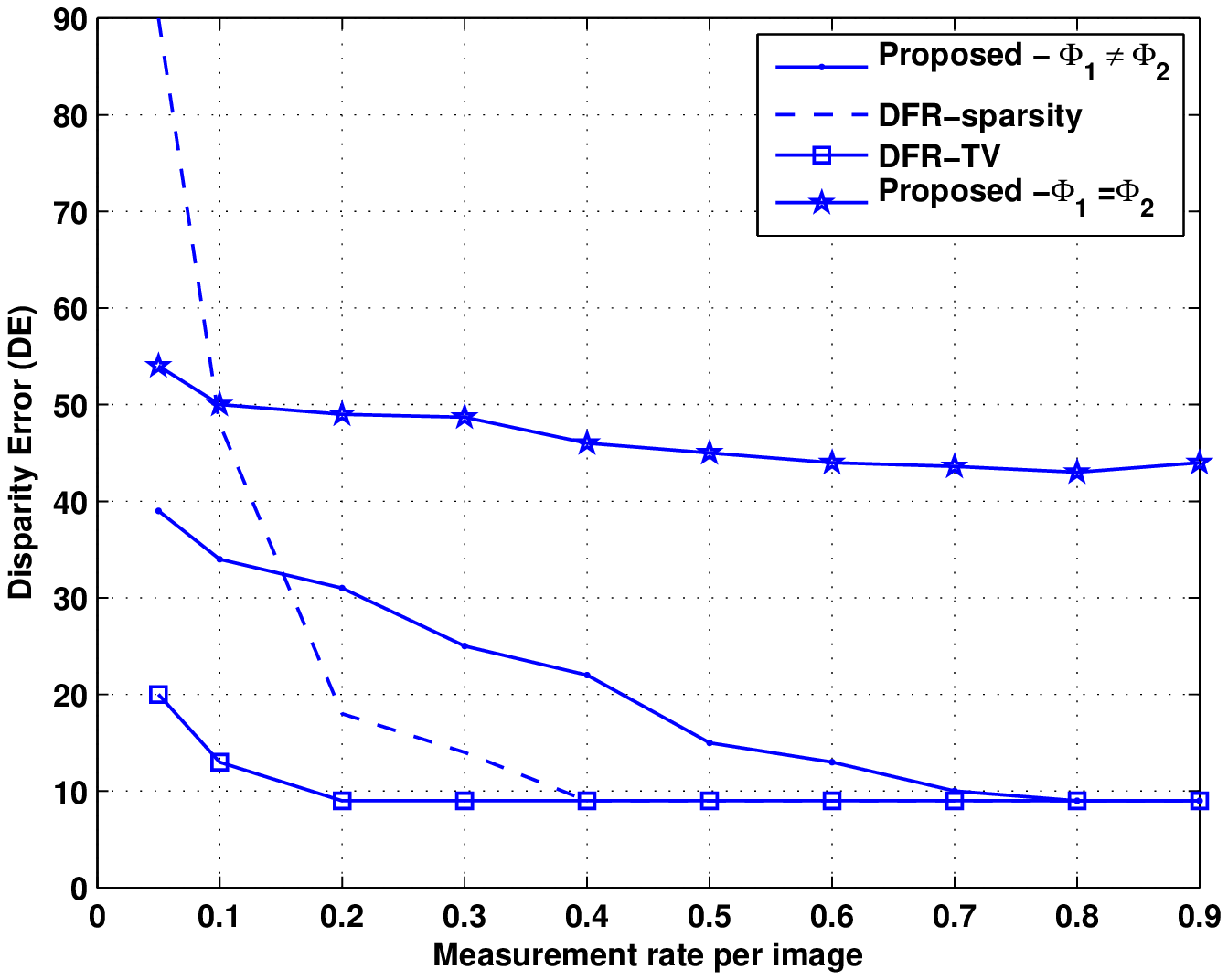} \\
 \mbox{(a)  } & \mbox{(b) }
   \end{array}$
 \caption{Comparison of the proposed scheme with the DFR schemes in the Tsukuba dataset: (a) comparison in terms of image prediction quality; (b) comparison in terms of disparity error. The performance of the proposed scheme is evaluated using both the same and different sets of measurement matrices.}
  \label{Fig:tsu_prop_indep}
  \end{figure*}

\begin{figure}[t!]
\centering
    \epsfxsize=2.9in \epsffile{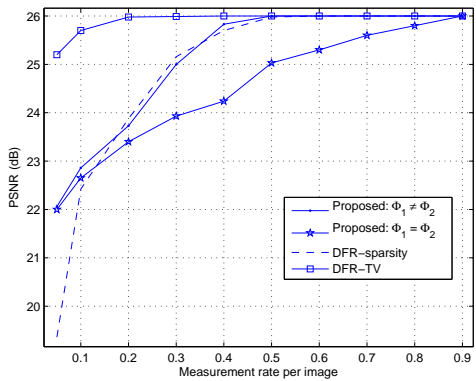}
 \caption{Comparison of the proposed scheme over disparity from reconstructed image (DFR) schemes for Tsukuba dataset. The disparity is estimated per block with a block size $4 \times 4$.  The performance of the proposed scheme is evaluated using both the same and different sets of measurement matrices. }
  \label{Fig:tsu_prop_indep_blk}
  \end{figure}

\subsection{Motion estimation performance} \label{result:motion}

\begin{figure}[t]
\centering
    \epsfxsize=2.9in \epsffile{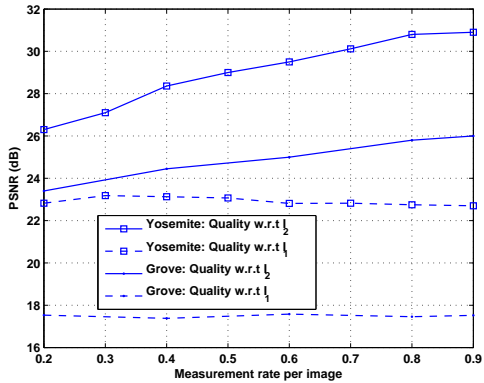}
 \caption{Illustration of the accuracy of motion field estimation in terms of image prediction quality for the Yosemite and Grove datasets. The quality of the predicted image $\hat{I}_2$ is compared with respect to $I_2$ and $I_1$. The prediction is carried out using the motion field estimated with block of pixels $4 \times 4$. }
  \label{Fig:motion_prediction_quality}
  \end{figure}

\begin{figure*}[!h]
\centering
 $\begin{array}{c@{\hspace{0 in}}c@{\hspace{0 in}}} \multicolumn{1}{l}{\mbox{}} &  \multicolumn{1}{l}{\mbox{}}  \\
    \epsfxsize=2.9in \epsffile{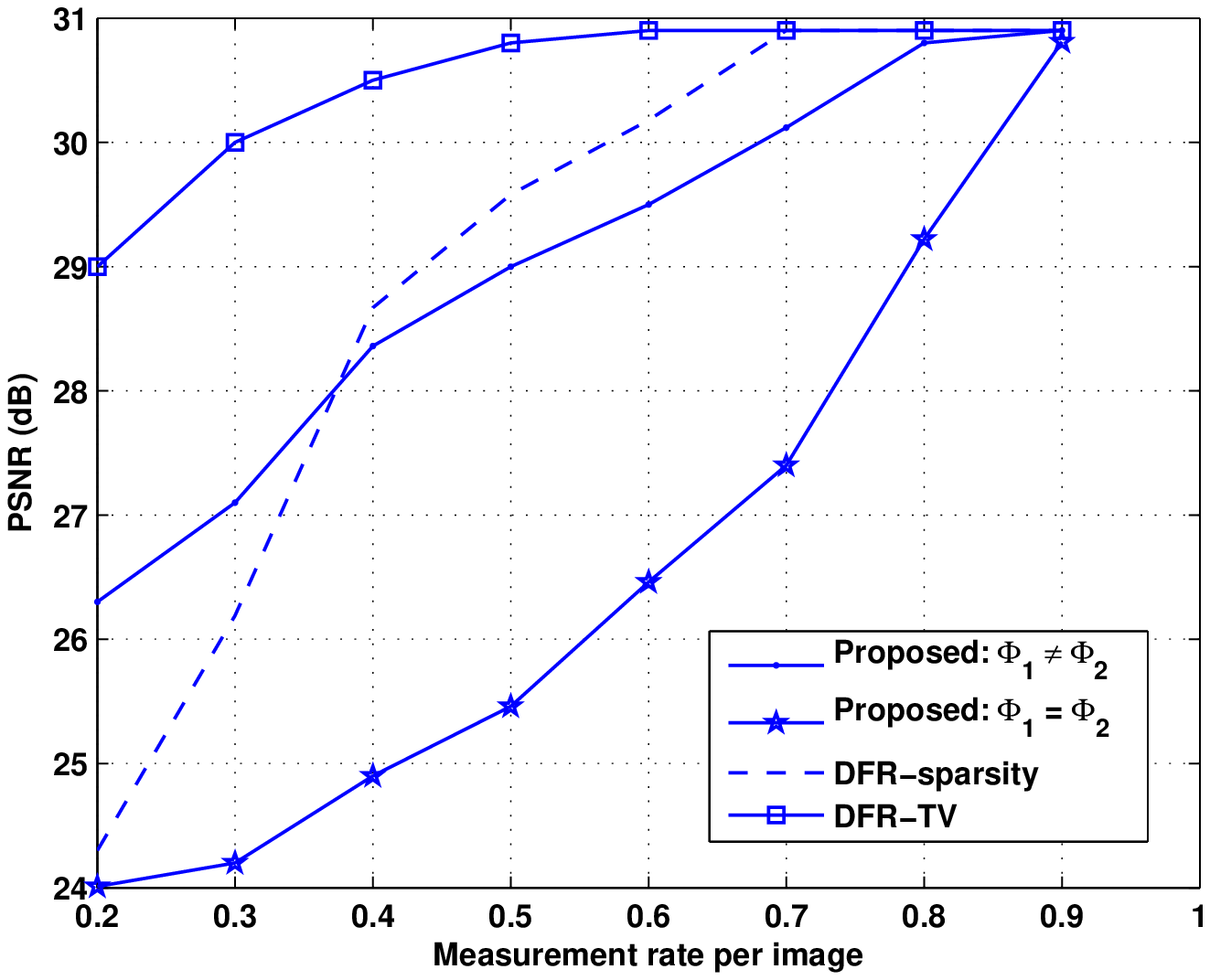}
             & \epsfxsize=2.9in \epsffile{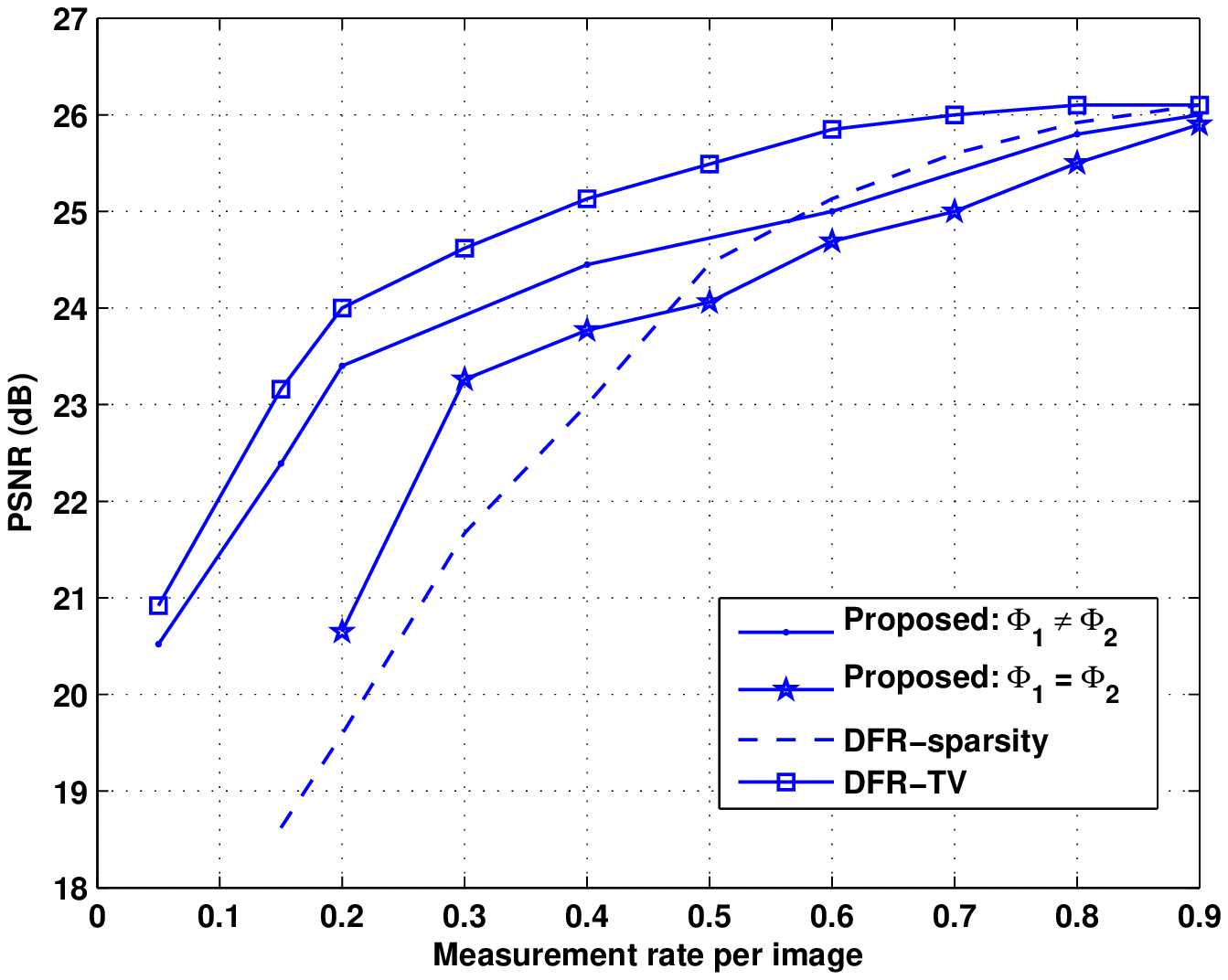} \\
     \mbox{(a)  } & \mbox{(b) }
  \end{array}$
 \caption{ Comparison of the quality of the predicted image $\tilde{I}_2$ between the proposed, DFR-sparsity and DFR-TV schemes: (a) Yosemite dataset; (b) Grove dataset. The image prediction is carried out using the motion field that is estimated using block of pixels $4 \times 4$.  }
  \label{Fig:motion_same_vs_diff1}
  \end{figure*}

We now illustrate the performance of our correlation estimation algorithm in video sequences. We first estimate motion vectors per blocks of $4\times4$ pixels with different sensing matrices $\Phi_1 \neq \Phi_2$ for each image. These vectors are then used to predict the second image from the first image. Fig.~\ref{Fig:motion_prediction_quality} compares the predicted image $\hat{I}_2$ with the original images $I_2$ and $I_1$ for Yosemite and Grove datasets respectively. It is clear that for a given measurement rate the predicted image $\hat{I}_2$ is closer to $I_2$ than $I_1$ which indicates that the motion between the images is efficiently captured by our correlation estimation algorithm. Similar experimental results are observed in the Mequon dataset. 

We then highlight the benefit of sampling the images with different sets of measurement matrices in Fig.~\ref{Fig:motion_same_vs_diff1}. From Fig.~\ref{Fig:motion_same_vs_diff1} we see that the quality of the predicted image $\hat{I}_2$ is better when the images are sampled with different measurement matrices, compared to the case where the same sampling matrix is used for all images. This confirms the results shown for the disparity estimation performance. We finally compare our results to the DFR-sparsity and DFR-TV schemes that build the correlation model from (independently) reconstructed images based on minimizing the sparsity and TV priori respectively. Fig.~\ref{Fig:motion_same_vs_diff1}(a) and Fig.~\ref{Fig:motion_same_vs_diff1}(b)  show the comparison for the Yosemite and Grove datasets respectively. From Fig.~\ref{Fig:motion_same_vs_diff1} we see that for both datasets the proposed scheme performs better than the DFR-sparsity scheme at low rates and competes with the DFR-sparsity scheme at high rates, as observed in the disparity estimation study. Similar experimental findings are observed in the Mequon dataset. Furthermore, we see that the proposed scheme competes with the performance of  DFR-TV scheme at low rate for the Grove dataset (see Fig.~\ref{Fig:motion_same_vs_diff1}(b)); as the Grove scene is textured with limited low frequency components, the $TV$ prior in the reconstruction scheme results in poor reconstruction quality in the textured areas. Overall, the proposed scheme provides effective motion estimation results, while avoiding an order of computational complexity $2T\mathcal{O}(NlogN)$ involved in the image reconstruction steps with the DFR-TV and DFR-sparsity schemes.

\subsection{Measurements quantization}

We briefly study here the performance of the correlation estimation algorithm when the measurements are affected by noise and in particular quantization noise. For the sake of simplicity we quantize the measurements using a uniform quantizer and we denote the quantized measurements as $\hat{Y}_1$ and $\hat{Y}_2$. We then estimate the correlation model by minimizing the energy in Eq.~(\ref{eqn:final_energy}) using the quantized measurements $\hat{Y}_1$ and $\hat{Y}_2$. 

Fig.~\ref{Fig:venu_effectQuant} shows the effect of quantization on the disparity estimation performance when measurements are uniformly quantized using $2$-, $3$- and $4$-bits. We use the disparity map to predict the second image in the Venus dataset. Interestingly we see that the $4$-bit quantizer does not significantly affect the quality of the disparity image, as the degradation hardly reaches $0.5$ dB in the quality of $\hat{I}_2$ at low to medium rates. As expected however the quality of the predicted image $\hat{I}_2$ decreases with increasing quantizer coarseness level for a fixed measurement rate. We then compare our results with the DFR-TV scheme that reconstructs the images by solving an optimization problem based on $BPDN_p$  in order to efficiently handle the quantization noise \cite{Laurent_BPDNp} and then use the reconstructed images for disparity estimation. From  Fig.~\ref{Fig:venu_effectQuant} we see that the performance gap between the DFR-TV and compressed domain estimation is approximately the same in both the unquantized and quantized (i.e., 2-bit) scenarios.  However, it should be noted that the DFR-TV scheme considers the nonlinearities due to quantization while reconstructing the images. Such effects are not considered in the proposed scheme. The solution of our scheme could also be improved by considering the quantization non-linearities but this problem is beyond the scope of this paper. 

\begin{figure}[t]
\centering
    \epsfxsize=2.9in \epsffile{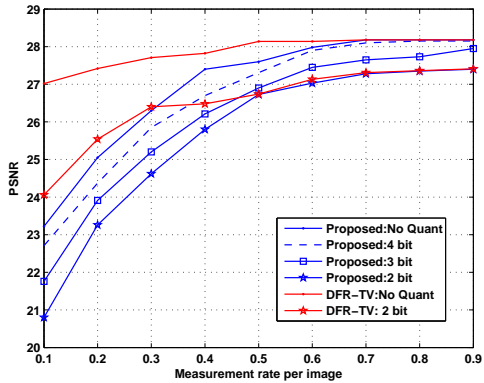}
 \caption{Effect of quantization on the quality of the predicted image $\hat{I}_2$ in the Venus dataset. When the measurements are quantized we used $BPDN_p$ \cite{Laurent_BPDNp} to reconstruct the images in DFR-TV based scheme. The image prediction is carried out using the dense disparity image estimated by solving \protect{Eq.~(\ref{eqn:final_energy})}. }
  \label{Fig:venu_effectQuant}
  \end{figure}

We also study the effect of  measurement quantization on the quality of the motion field. Fig. ~\ref{Fig:effect_quant1}(a)  and Fig. ~\ref{Fig:effect_quant1}(b) shows the quality of the predicted image where the prediction is performed with motion vectors estimated from quantized measurements. The quality of the predicted image or equivalently the accuracy of motion estimation is reduced when the measurements are quantized as expected. Similarly to the case of disparity estimation, the influence of quantization is negligible when the measurements are quantized with a $4$-bit quantizer. We also compare our results to the DFR-TV scheme that reconstructs the images by solving an optimization problem based on $BPDN_p$ and estimate motion from the reconstructed images. The performance of the DFR-TV scheme for the $2$-bit quantization scenario is shown in Fig. ~\ref{Fig:effect_quant1}. Interestingly, when the measurements are quantized we see that the proposed scheme competes with the DFR-TV scheme, because of the poor image reconstruction performance in the DFR-TV scheme when the measurements are coarsely quantized (i.e., 2-bit quantizer).  

\begin{figure*}[!h]
\centering
 $\begin{array}{c@{\hspace{0 in}}c@{\hspace{0 in}}} \multicolumn{1}{l}{\mbox{}} &  \multicolumn{1}{l}{\mbox{}}  \\
    \epsfxsize=2.9in \epsffile{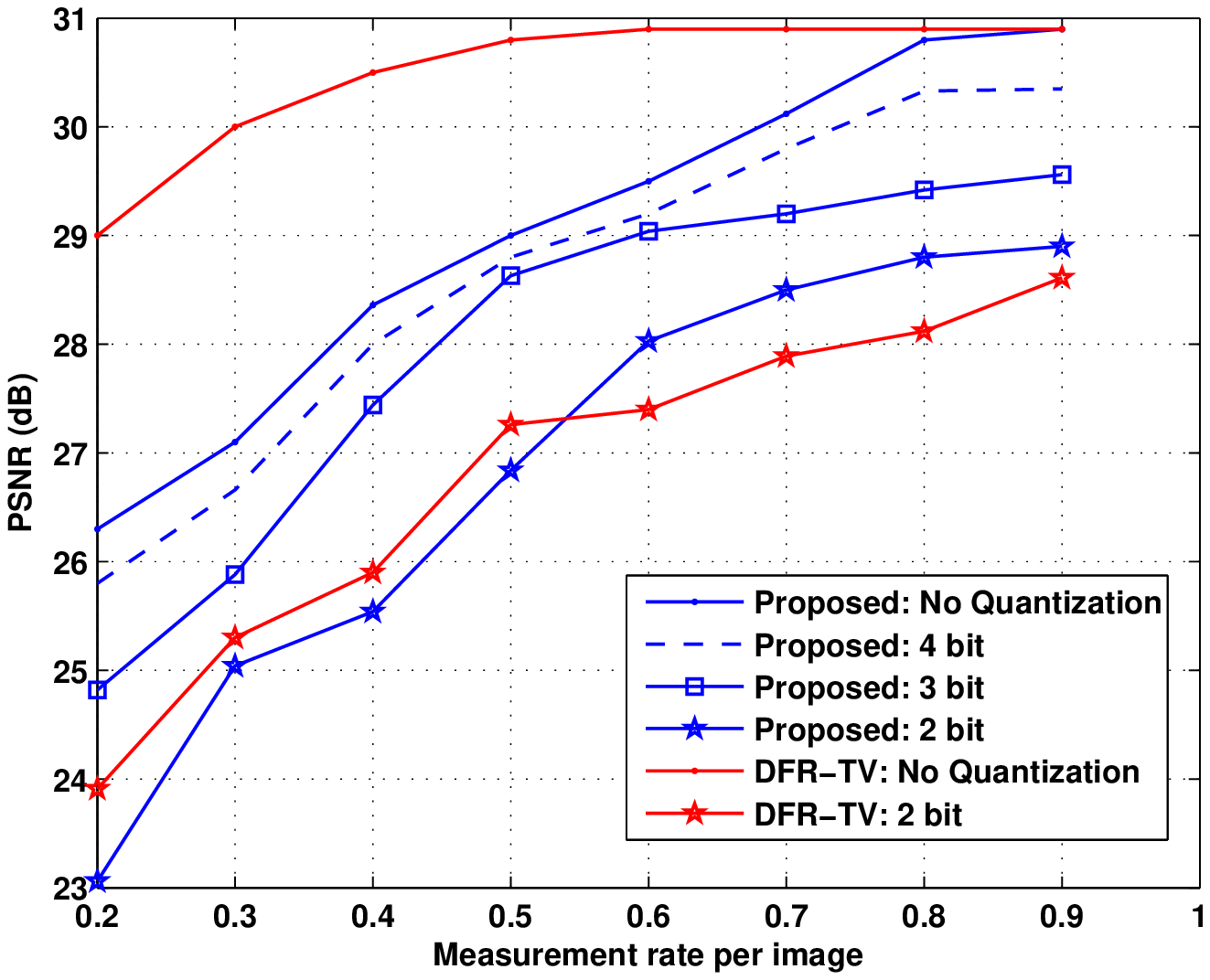}
             & \epsfxsize=2.9in \epsffile{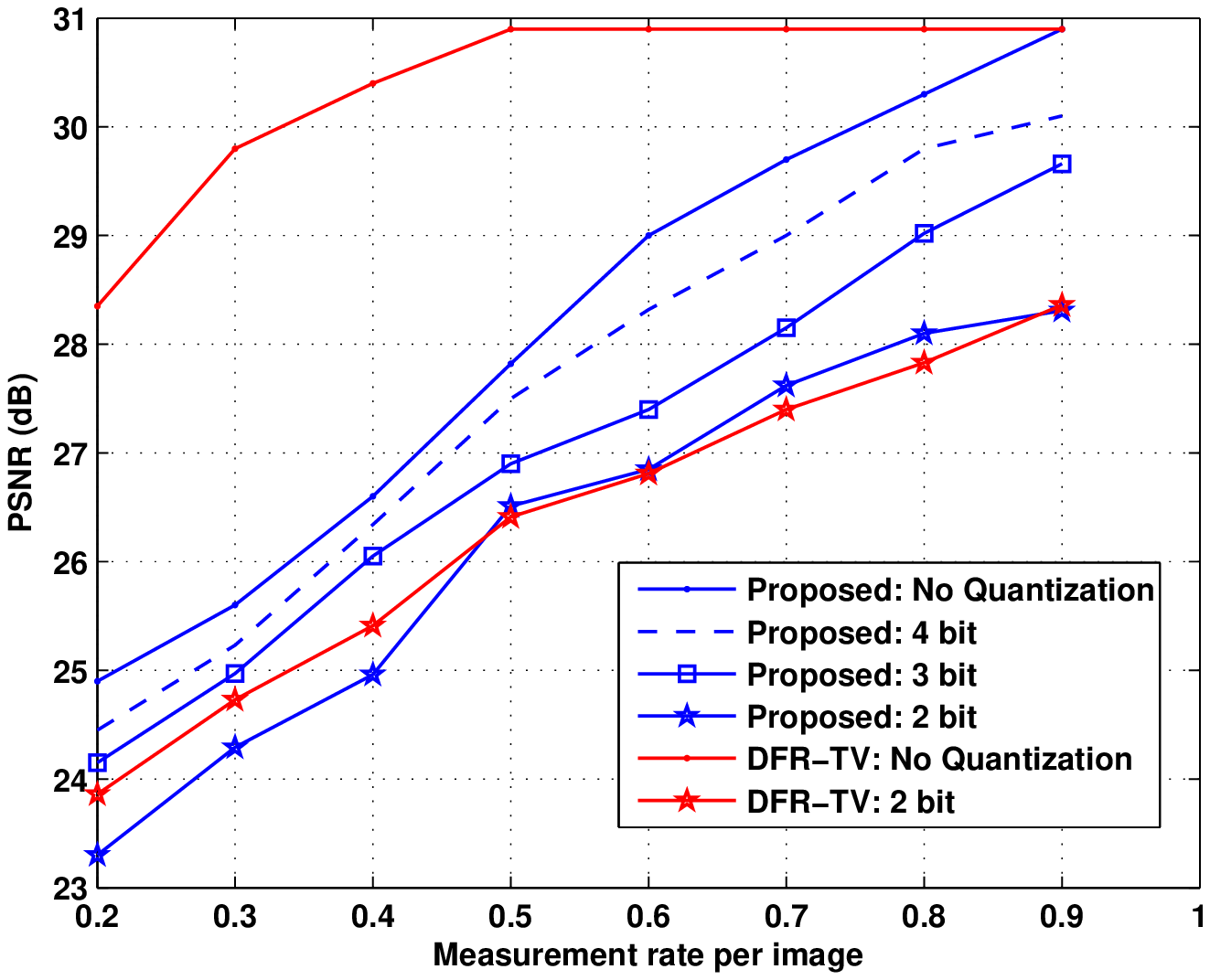} \\
   \mbox{(a) } & \mbox{(b)}
  \end{array}$
 \caption{Effect of using 4-, 3- and 2-bits quantizers on the image prediction quality for (a) Yosemite dataset and (b) Mequon dataset.  When the measurements are quantized we used $BPDN_p$  \cite{Laurent_BPDNp} to reconstruct the images in DFR-TV scheme. The image prediction is carried out using the motion field  estimated by solving \protect{Eq.~(\ref{eqn:final_energy})}}
  \label{Fig:effect_quant1}
  \end{figure*}

\subsection{Importance of correlation in joint reconstruction} \label{sec:jointrec}

We finally propose to study the importance of accurate correlation estimation in a novel joint reconstruction algorithm (see Fig.~\ref{Fig:block_scheme}). We propose to reconstruct a pair of images $\acute{I}_1$ and $\acute{I}_2$ by enforcing consistency with the compressed information and also with the estimated correlation model. A pair of image $\acute{I}_1$ and $\acute{I}_2$ is  reconstructed as a solution of the following constrained optimization problem:
\begin{equation} \label{eqn:jr}
\begin{array}{ll}
(\acute{I}_1, \acute{I}_2) = \underset{I_1, I_2}{\argmin} \;(\normc{ \psi^*{I}_1} +  \normc{\psi^*{I}_2}) \; \; \mbox{s.t.} \; \; \norm{ Y_1 - \Phi_1 {I}_1} = 0,   \norm{Y_2 - \Phi_2 {I}_2} = 0,  \sqnorm{{I}_2 - A{I}_1} \leq \epsilon,
  \end{array}
\end{equation}
where $\psi$ is a redundant dictionary or an orthonormal basis in which the image is assumed to be sparse and $\psi^*$ is the conjugate transpose of $\psi$. From Eq.~(\ref{eqn:jr}) it is clear that the images can be reconstructed independently if we solve the optimization without the last constraint $\sqnorm{{I}_2 - A {I}_1} \leq \epsilon$. This corresponds to the independent reconstruction of sparse images in $\psi$ which agrees with the measurement information. By adding the last constraint, we impose that the pair of images also fit with the correlation model, in addition to the sparsity and data fidelity constraints. As a result the reconstruction quality for a given measurement rate is better when the images are reconstructed jointly than independently. The optimization problem for joint reconstruction can be re-written with proximity operators and solved efficiently using the parallel proximal algorithm (PPXA) proposed by Combettes \emph{et al.} \cite{Combettes_prox} (see \cite{Vijay_ICIP2011} for a similar solution in an asymmetric joint decoding scheme).

\begin{figure*}
\centering
$\begin{array}{c@{\hspace{0 in}}c} \multicolumn{1}{l}{\mbox{}} &  \multicolumn{1}{l}{\mbox{}} \\
    \epsfxsize=2.9in \epsffile{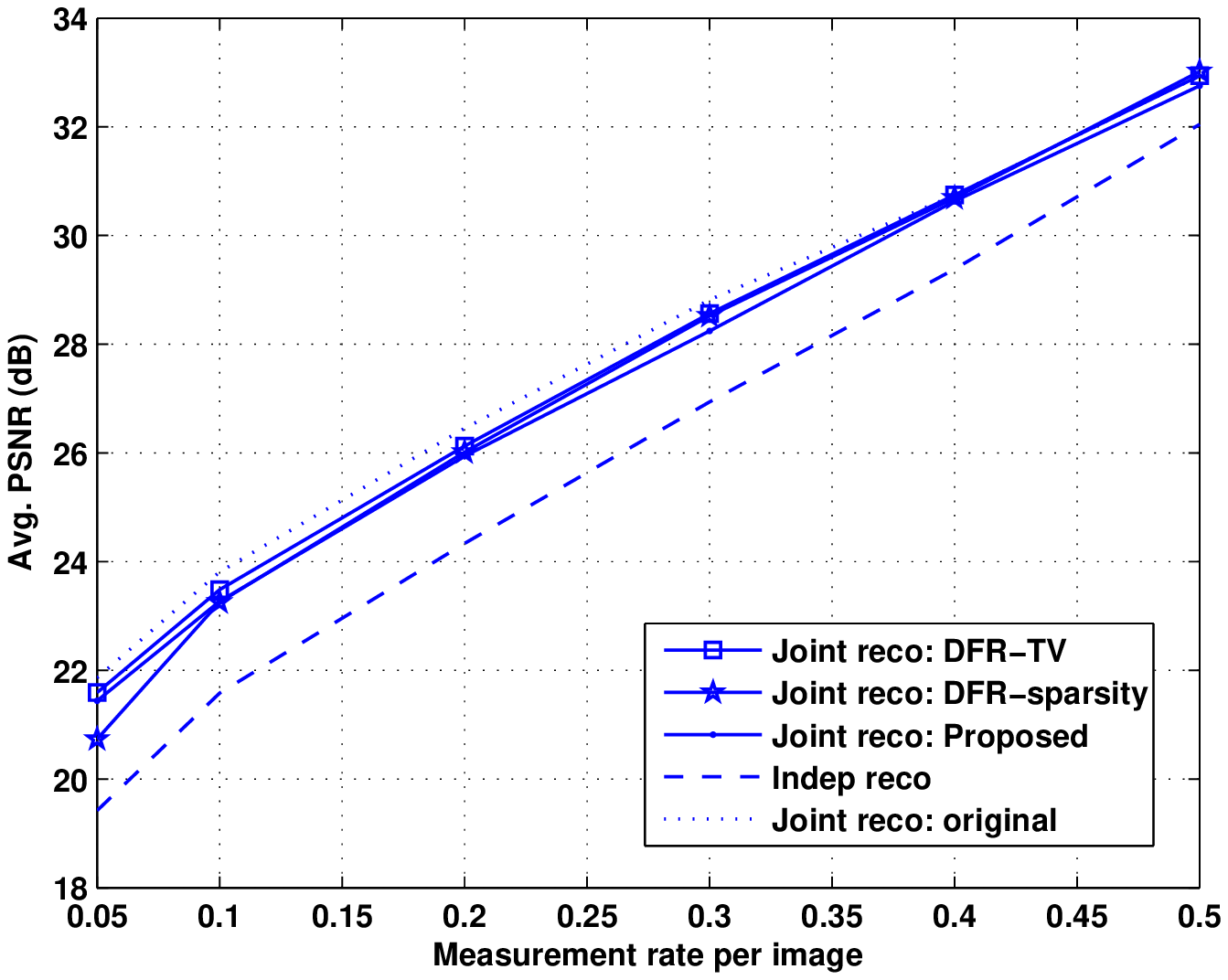}
             & \epsfxsize=2.9in \epsffile{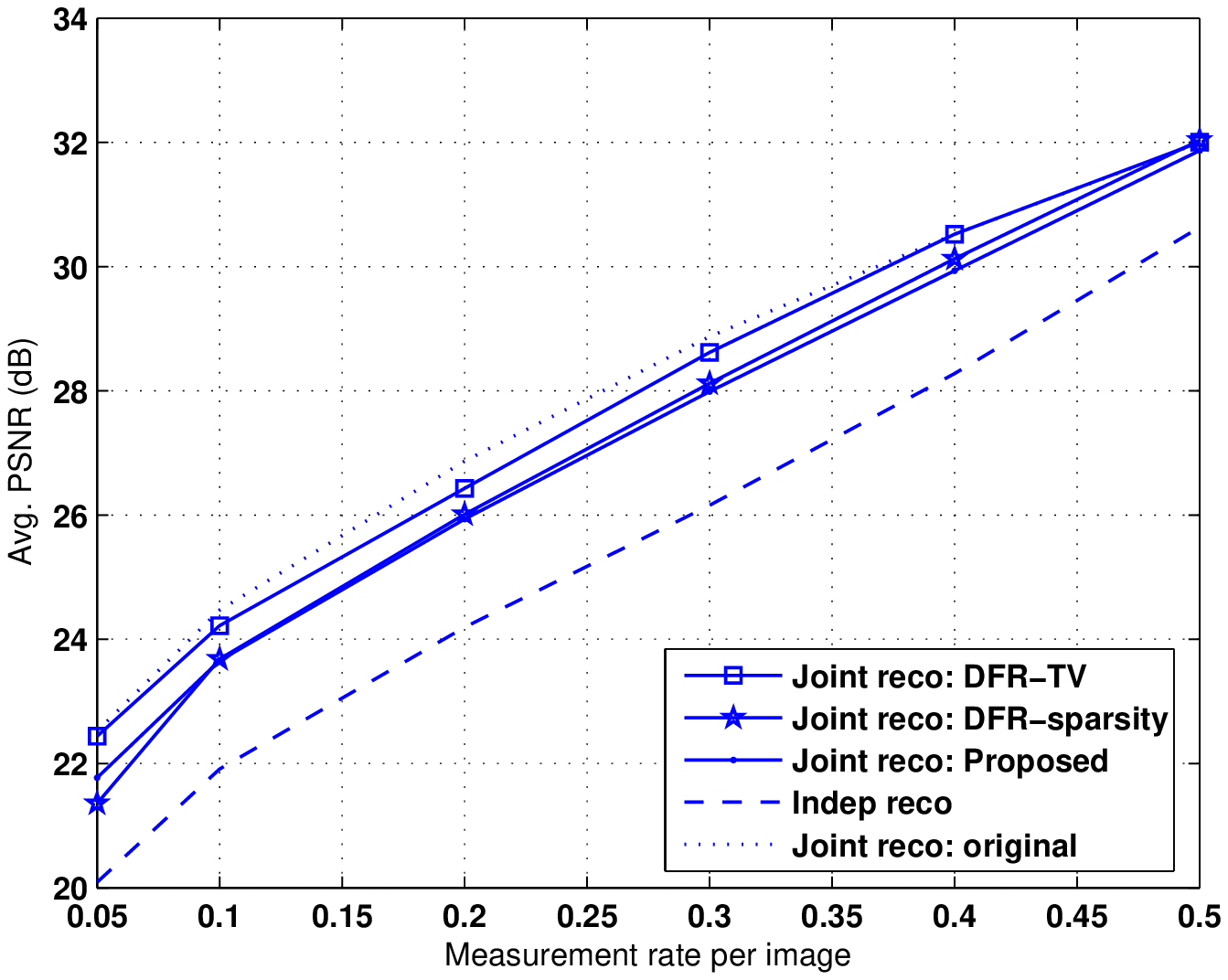} \\
   \mbox{(a)} & \mbox{(b) }
  \end{array}$
 \caption{ Influence of disparity accuracy on the joint reconstruction performance for (a) Tsukuba and (b) Venus datasets. The joint reconstruction is carried out using the dense disparity image in the proposed and DFR-based schemes. The joint reconstruction performance is also compared to the independent reconstruction scheme. }
  \label{Fig:tsuvenu_jr_vs_dfr}
  \end{figure*}

We  analyze the performance of our joint reconstruction scheme with a constraint imposed by the correlation estimated from linear measurements. In particular, we perform joint reconstruction experiments using the correlation model that is estimated using a different measurement matrix for each image, i.e., $\Phi_1 \neq \Phi_2$.  We assume that the image is sparse in an orthonormal basis constructed using a wavelet transform. In our experiments, the parameter $\epsilon$ in the optimization problem is selected based on a trial and error procedure that maximizes the reconstruction image quality $\acute{I_1}$ and $\acute{I_2}$, and we set $\epsilon = 14$. We first compare our results to an independent reconstruction scheme that does not exploit the correlation between the images (i.e., the constraint $\sqnorm{I_2 - A I_1} \leq \epsilon$ is removed in Eq.~(\ref{eqn:jr})). Fig.~\ref{Fig:tsuvenu_jr_vs_dfr} compares the average reconstruction quality between the joint (denoted as \emph{Joint:Proposed}) and independent reconstruction schemes for Tsukuba and Venus datasets. The joint reconstruction improves the reconstruction quality by 2 dB at low measurement rates and about 1 dB at high rates. We also observe in our experiments that the PSNR quality of the reconstructed images $\acute{I_1}$ and $\acute{I_2}$ is similar at a given measurement rate. The disparity estimate thus proves to be useful in improving the quality in the image reconstruction process. 
 
We then jointly reconstruct the images using a disparity estimated with DFR-sparsity and DFR-TV schemes. Fig.~\ref{Fig:tsuvenu_jr_vs_dfr} compares the quality of reconstructed image between the proposed and DFR schemes. We see that the disparity estimated from compressed measurements leads to a competitive performance with the disparity estimated by the DFR-sparsity scheme in terms of joint reconstruction quality. This is particularly obvious at low rate $0.05$ where the DFR-sparsity scheme fails to accurately estimate the disparity. However, the quality of the reconstructed images is marginally penalized (i.e., 0.2 dB and 0.4 dB for the Tsukuba and Venus datasets respectively) compared to the reconstruction achieved when the disparity estimated by the DFR-TV scheme. Finally, we carry out the same experiments in a scenario where the images are jointly reconstructed using a correlation model that is estimated from the original images. This scheme serves as a benchmark for the joint reconstruction since the correlation is accurately known at the decoder. The corresponding results are denoted as  \emph{Joint reco:original} in Fig.~\ref{Fig:tsuvenu_jr_vs_dfr}. We see that the reconstruction quality achieved with the correlation estimated from compressed measurements converges to the performance benchmark when the measurement rate increases which further confirms the quality of the disparity estimation. Finally, it should be noted that similar tendencies have been observed in joint reconstruction of video frames, but the corresponding results are omitted here due to space constraints.

\section{Conclusions} \label{sec:conc}
In this paper we have presented a framework for estimating the correlation between images given in the form of linear measurements without implementing explicit image reconstruction steps. We have proposed a linear representation of disparity and motion models and show that the correlation can be estimated in the compressed domain, thanks to the distance preserving property of the sensing matrix. The correlation is estimated by solving a regularized energy model that enforces consistency with the measurements and smoothness of the correlation information. Extensive experimental results demonstrate that the proposed methodology provides a good estimation of disparity or motion fields in different natural and synthetic image datasets, especially when images are sampled with different measurement matrices. We also show that our correlation estimation solution competes with the correlation estimation solutions that reconstructs images a priori, but becomes clearly advantageous due to its lower computational complexity. The correlation estimation from compressed measurements thus provides an effective solution for distributed scene analysis or coding applications in low complexity sensor networks.

\bibliographystyle{IEEEtran}
\bibliography{mybib}

\begin{thebibliography}{10}
\providecommand{\url}[1]{#1}
\csname url@samestyle\endcsname
\providecommand{\newblock}{\relax}
\providecommand{\bibinfo}[2]{#2}
\providecommand{\BIBentrySTDinterwordspacing}{\spaceskip=0pt\relax}
\providecommand{\BIBentryALTinterwordstretchfactor}{4}
\providecommand{\BIBentryALTinterwordspacing}{\spaceskip=\fontdimen2\font plus
\BIBentryALTinterwordstretchfactor\fontdimen3\font minus
  \fontdimen4\font\relax}
\providecommand{\BIBforeignlanguage}[2]{{%
\expandafter\ifx\csname l@#1\endcsname\relax
\typeout{** WARNING: IEEEtran.bst: No hyphenation pattern has been}%
\typeout{** loaded for the language `#1'. Using the pattern for}%
\typeout{** the default language instead.}%
\else
\language=\csname l@#1\endcsname
\fi
#2}}
\providecommand{\BIBdecl}{\relax}
\BIBdecl

\bibitem{Vijay_ICASSP2011}
V.~Thirumalai and P.~Frossard, ``Dense disparity estimation from linear
  measurements,'' in \emph{Proc. IEEE International Conference on Acoustics,
  Speech and Signal Processing}, 2011.

\bibitem{Donoho}
D.~Donoho, ``Compressed sensing,'' \emph{IEEE Trans. Information Theory},
  vol.~52, pp. 1289--1306, 2006.

\bibitem{Candes}
E.~J. Candes, J.~Romberg, and T.~Tao, ``Robust uncertainty principles: exact
  signal reconstruction from highly incomplete frequency information,''
  \emph{IEEE Trans. Information Theory}, vol.~52, pp. 489--509, 2006.

\bibitem{Candes_imagecoding}
E.~J. Candes and J.~Romberg, ``Practical signal recovery from random
  projections,'' in \emph{Proc. SPIE Computational Imaging}, 2005.

\bibitem{singlepixel_CS}
M.~Duarte, M.~Davenport, D.~Takhar, J.~Laska, T.~Sun, K.~Kelly, and
  R.~Baraniuk, ``Single-pixel imaging via compressive sampling,'' \emph{IEEE
  Signal Processing Magazine}, vol.~25, no.~2, pp. 83--91, 2008.

\bibitem{Mun_blockCS}
S.~Mun and J.~Fowler, ``Block compressed sensing of images using directional
  transforms,'' in \emph{Proc. IEEE International Conference on Image
  Processing}, 2009.

\bibitem{Gan_Eusipco}
L.~Gan, T.~T. Do, and T.~D. Tran, ``Fast compressive imaging using scrambled
  hadamard ensemble,'' in \emph{Proc. European Signal and Image Processing
  Conference}, 2008.

\bibitem{Stankovic}
V.~Stankovic, L.~Stankovic, and S.~Cheng, ``Compressive video sampling,'' in
  \emph{Proc. European Signal and Image Processing Conference}, 2008.

\bibitem{Park}
J.~Y. Park and M.~B. Wakin, ``A multiscale framework for compressive sensing of
  video,'' in \emph{Proc. Picture Coding Symposium}, 2009.

\bibitem{Vaswani}
N.~Vaswani, ``Kalman filtered compressed sensing,'' in \emph{Proc. IEEE
  International Conference on Image Processing}, 2008.

\bibitem{Pudlewski}
S.~Pudlewski, A.~Prasanna, and T.~Melodia, ``Compressed sensing enabled video
  streaming for wireless multimedia sensor networks,'' 2011, to appear in IEEE
  Trans. on Mobile Computing.

\bibitem{Goyal}
V.~K. Goyal, A.~K. Fletcher, and S.~Rangan, ``Compressive sampling and lossy
  compression,'' \emph{IEEE Signal Processing Magazine}, vol.~25, no.~2, pp.
  48--56, 2008.

\bibitem{Duarte_DCS}
M.~F. Duarte, S.~Sarvotham, D.~Baron, M.~B. Wakin, and R.~G. Baraniuk,
  ``Distributed compressed sensing of jointly sparse signals,'' in \emph{Proc.
  Asilomar Conference on Signal System and Computing}, 2005.

\bibitem{Duarte_DCS1}
------, ``Universal distributed sensing via random projections,'' in
  \emph{Proc. Information Processing in Sensor Networks}, 2006.

\bibitem{Kang_ICASSP}
L.~W. Kang and C.~S. Lu, ``Distributed compressive video sensing,'' in
  \emph{Proc. IEEE International Conference on Acoustics, Speech and Signal
  Processing}, 2009.

\bibitem{Do_ICIP}
T.~T. Do, Y.~Chen, D.~T. Nguyen, N.~Nguyen, L.~Gan, and T.~Tran, ``Distributed
  compressed video sensing,'' in \emph{Proc. IEEE International Conference on
  Image Processing}, 2009.

\bibitem{YiMa_PCS}
J.~P. Nebot, Y.~Ma, and T.~Huang, ``Distributed video coding using compressive
  sampling,'' in \emph{Proc. Picture Coding Symposium}, 2009.

\bibitem{Trocan_ICME}
M.~Trocan, T.~Maugey, J.~E. Fowler, and B.~Pesquet-Popescu, ``Disparity
  compensated compressed sensing reconstruction for multiview images,'' in
  \emph{Proc. IEEE International Conference on Multimedia and Expo}, 2010.

\bibitem{Trocan_MMSP}
M.~Trocan, T.~Maugey, E.~W. Tramel, J.~E. Fowler, and B.~Pesquet-Popescu,
  ``Multistage compressed-sensing reconstruction of multiview images,'' in
  \emph{Proc. IEEE International workshop on Multimedia Signal Processing},
  2010.

\bibitem{Vijay_TIP}
\BIBentryALTinterwordspacing
V.~Thirumalai and P.~Frossard, ``Distributed {R}epresentation of
  {G}eometrically {C}orrelated {I}mages with {C}ompressed {L}inear
  {M}easurements,'' \emph{submitted to {IEEE} {T}rans. on {I}mage {P}roc.},
  2011. [Online]. Available: \url{http://arxiv.org/abs/1111.5612}
\BIBentrySTDinterwordspacing

\bibitem{Wakin}
J.~Y. Park and M.~B. Wakin, ``A geometric approach to multi-view compressive
  imaging,'' \emph{to appear in EURASIP Journal on Advances in Signal
  Processing}, 2012.

\bibitem{Li_jr}
X.~Li, Z.~Wei, and Z.~Xiao, ``Compressed sensing joint reconstruction for
  multi-view images,'' \emph{Electronics Letters}, vol.~46, no.~23, pp.
  1548--1550, Nov 2010.

\bibitem{Gan_ICASSP}
T.~Do, T.~Tran, and L.~Gan, ``Fast compressive sampling with structurally
  random matrices,'' in \emph{Proc. IEEE International Conference on Acoustics,
  Speech and Signal Processing}, 2008.

\bibitem{Veksler}
O.~Veksler, ``Efficient graph based energy minimization methods in computer
  vision,'' Ph.D. dissertation, Cornell University, 1999.

\bibitem{Baker_flow}
\BIBentryALTinterwordspacing
S.~Baker, S.~Roth, D.~Scharstein, M.~Black, J.~Lewis, and R.~Szeliski, ``A
  database and evaluation methodology for optical flow,'' \emph{International
  Journal of Computer Vision}, vol.~1, no.~92, pp. 1--31, 2011. [Online].
  Available: \url{http://www.springerlink.com/content/p516733117226378/}
\BIBentrySTDinterwordspacing

\bibitem{Boykov_GC}
Y.~Boykov, O.~Veksler, and R.~Zabih, ``Fast approximate energy minimization via
  graph cuts,'' \emph{IEEE Trans. on Pattern Analysis and machine
  intelligence}, vol.~23, no.~11, pp. 1222--1239, Nov 2001.

\bibitem{Belief_prop}
P.~Felzenszwalb and D.~Huttenlocher, ``Efficient belief propagation for early
  vision,'' \emph{International Journal on Computer Vision}, vol.~70, no.~1,
  pp. 41--54, 2006.

\bibitem{Szeliski_MRF}
R.~Szeliski, R.~Zabih, D.~Scharstein, O.~Veksler, V.~Kolmogorov, A.~Agarwala,
  M.~Tappen, and C.~Rother, ``A comparative study of energy minimization
  methods for markov random fields with smoothness-based priors,'' \emph{IEEE
  Trans. on Pattern Analysis and machine intelligence}, vol.~30, no.~6, pp.
  1068--1080, Jun 2008.

\bibitem{Baraniuk_JLlemma}
R.~Baraniuk, M.~Davenport, and R.~DeVore, ``A simple proof of the restricted
  isometry property for random matrices,'' \emph{Constructive Approximation,
  Springer}, vol.~28, pp. 253--263, Jan 2008.

\bibitem{Gan_JLlemma}
T.~Do, L.~Gan, Y.~Chen, N.~Nguyen, and T.~Tran, ``Fast and efficient
  dimensionality reduction using structurally random matrices,'' in \emph{Proc.
  IEEE International Conference on Acoustics, Speech and Signal Processing},
  2009.

\bibitem{Scharstein}
D.~Scharstein and R.~Szeliski, ``A taxonomy and evaluation of dense stereo,''
  \emph{International Journal on Computer Vision}, vol.~47, pp. 7--42, 2002.

\bibitem{GC_regparam}
L.~Zhang and S.~M. Seitz, ``Estimating optimal parameters for mrf stereo from a
  single image pair,'' \emph{IEEE Trans. on Pattern Analysis and Machine
  Intelligence}, vol.~29, no.~2, pp. 331--342, 2007.

\bibitem{GPSR}
M.~A.~T. Figueiredo, R.~D. Nowak, and S.~J. Wright, ``Gradient projection for
  sparse reconstruction: application to compressed sensing and other inverse
  problems,'' \emph{IEEE Journal of Selected Topics in Signal Processing},
  vol.~1, pp. 586--597, 2007.

\bibitem{bpdqtbx}
\BIBentryALTinterwordspacing
D.~K. Hammond, L.~Jacques, M.~Fadili, G.~Puy, and P.~Vandergheynst, ``The basis
  pursuit dequantizer (bpdq) toolbox,'' July 2009. [Online]. Available:
  \url{http://wiki.epfl.ch/bpdq}
\BIBentrySTDinterwordspacing

\bibitem{nesta}
S.~Becker, J.~Bobin, and E.~Candes, ``{NESTA}: A fast and accurate first-order
  method for sparse recovery,'' Caltech, Tech. Rep., 2009.

\bibitem{Montager}
Y.~L. Montagner, E.~Angelini, and J.-C. Olivo-Marin, ``Comparison of
  reconstruction algorithms in compressed sensing applied to biological
  imaging,'' in \emph{Proc. IEEE International Symposium on Biological
  Imaging}, 2011.

\bibitem{Laurent_BPDNp}
L.~Jacques, D.~K. Hammond, and M.~J. Fadili, ``Dequantizing compressed sensing:
  When oversampling and non-gaussian constraints combine,'' \emph{IEEE Trans.
  on Information Theory}, vol.~57, pp. 559--571, Jan 2011.

\bibitem{Combettes_prox}
\BIBentryALTinterwordspacing
P.~L. Combettes and J.-C. Pesquet, ``Proximal splitting methods in signal
  processing,'' \emph{In Fixed Point Algorithms for Inverse Problems in Science
  and Engineering. Springer,}, 2010. [Online]. Available:
  \url{http://arxiv.org/abs/0912.3522v4}
\BIBentrySTDinterwordspacing

\bibitem{Vijay_ICIP2011}
V.~Thirumalai and P.~Frossard, ``Image reconstruction from compressed linear
  measurements with side information,'' in \emph{accepted to Proc. IEEE
  International Conference on Image Processing}, 2011.

\end{thebibliography}

\end{document}